\newcommand{\norm}[1]{\left\Vert#1\right\Vert}
\newcommand{\abs}[1]{\left\vert#1\right\vert}
\newcommand{\set}[1]{\left\{#1\right\}}
\newcommand{\parr}[1]{\left (#1\right )}
\newcommand{\Real}{\mathbb R}
\newcommand{\eps}{\varepsilon}
\newcommand{\veps}{\bm{\varepsilon}}
\newcommand{\too}{\rightarrow}
\newcommand{\diag}{\textrm{diag}} 
\newcommand{\dist}{\textrm{d}} 
\newcommand{\zero}{\mathbf{0}}
\newcommand{\eg}{{e.g.}}
\newcommand{\ie}{{i.e.}}
\newtheorem*{rep@theorem}{\rep@title}
\newcommand{\newreptheorem}[2]{%
\newenvironment{rep#1}[1]{%
 \def\rep@title{#2 \ref{##1}}%
 \begin{rep@theorem}}%
 {\end{rep@theorem}}}
\newtheorem{theorem}{Theorem}
\newtheorem{lemma}{Lemma}
\def\eqref#1{equation~\ref{#1}}
\def\Eqref#1{Equation~\ref{#1}}
\def\1{\bm{1}}
\def\eps{{\epsilon}}
\def\vzero{{\bm{0}}}
\def\vb{{\bm{b}}}
\def\ve{{\bm{e}}}
\def\vn{{\bm{n}}}
\def\vq{{\bm{q}}}
\def\vu{{\bm{u}}}
\def\vv{{\bm{v}}}
\def\vw{{\bm{w}}}
\def\vx{{\bm{x}}}
\def\vy{{\bm{y}}}
\def\vz{{\bm{z}}}
\def\mD{{\bm{D}}}
\def\mU{{\bm{U}}}
\def\mW{{\bm{W}}}
\DeclareMathAlphabet{\mathsfit}{\encodingdefault}{\sfdefault}{m}{sl}
\SetMathAlphabet{\mathsfit}{bold}{\encodingdefault}{\sfdefault}{bx}{n}
\def\gD{{\mathcal{D}}}
\def\gH{{\mathcal{H}}}
\def\gM{{\mathcal{M}}}
\def\gN{{\mathcal{N}}}
\def\gR{{\mathcal{R}}}
\def\gS{{\mathcal{S}}}
\def\gX{{\mathcal{X}}}
\def\gY{{\mathcal{Y}}}
\newcommand{\E}{\mathbb{E}}
\DeclareMathOperator*{\argmin}{arg\,min}
\icmltitlerunning{Implicit Geometric Regularization for Learning Shapes}
\begin{document}

\twocolumn[
\icmltitle{Implicit Geometric Regularization for Learning Shapes}



\icmlsetsymbol{equal}{*}

\begin{icmlauthorlist}
\icmlauthor{Amos Gropp}{wis}
\icmlauthor{Lior Yariv}{wis}
\icmlauthor{Niv Haim}{wis}
\icmlauthor{Matan Atzmon}{wis}
\icmlauthor{Yaron Lipman}{wis}
\end{icmlauthorlist}

\icmlaffiliation{wis}{Department of Computer Science \& Applied Mathematics, Weizmann Institute of Science, Rehovot, Israel}

\icmlcorrespondingauthor{Amos Gropp}{amos.gropp@weizmann.ac.il}

\icmlkeywords{Machine Learning, ICML}

\vskip 0.3in
]



\printAffiliationsAndNotice{}  

\begin{abstract}
Representing shapes as level sets of neural networks has been recently proved to be useful for different shape analysis and reconstruction tasks. So far, such representations were computed using either: (i) pre-computed implicit shape representations; or (ii) loss functions explicitly defined over the neural level sets. 

In this paper we offer a new paradigm for computing high fidelity implicit neural representations directly from raw data (\ie, point clouds, with or without normal information). We observe that a rather simple loss function, encouraging the neural network to vanish on the input point cloud and to have a unit norm gradient, possesses an implicit geometric regularization property that favors smooth and natural zero level set surfaces, avoiding bad zero-loss solutions. 

We provide a theoretical analysis of this property for the linear case, and show that, in practice, our method leads to state of the art implicit neural representations with higher level-of-details and fidelity compared to previous methods. 
\end{abstract}

\begin{figure}[t]
    \begin{tabular}{ccc}
      \hspace{-8pt} \includegraphics[width=0.31\columnwidth]{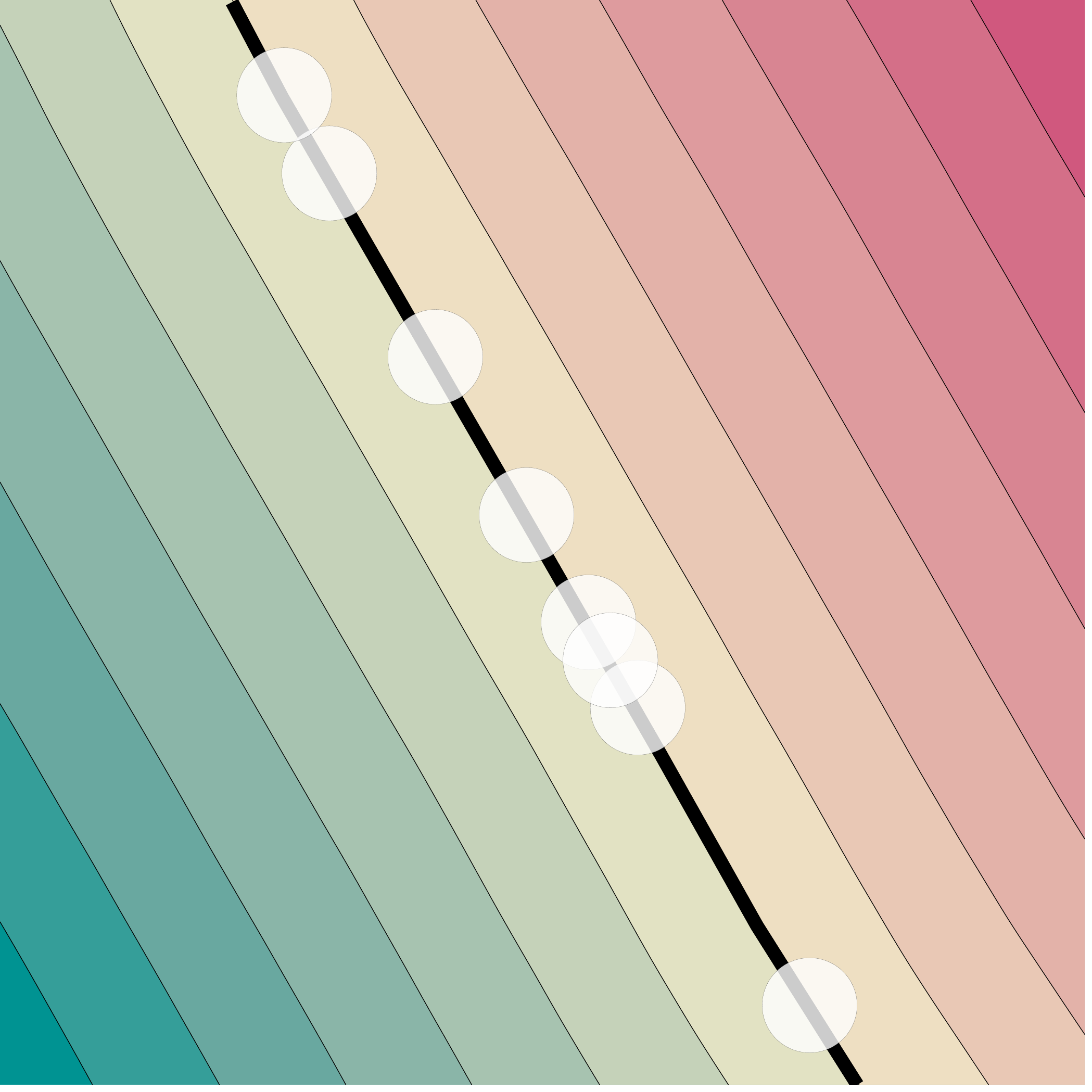} & 
      \hspace{-8pt} \includegraphics[width=0.31\columnwidth]{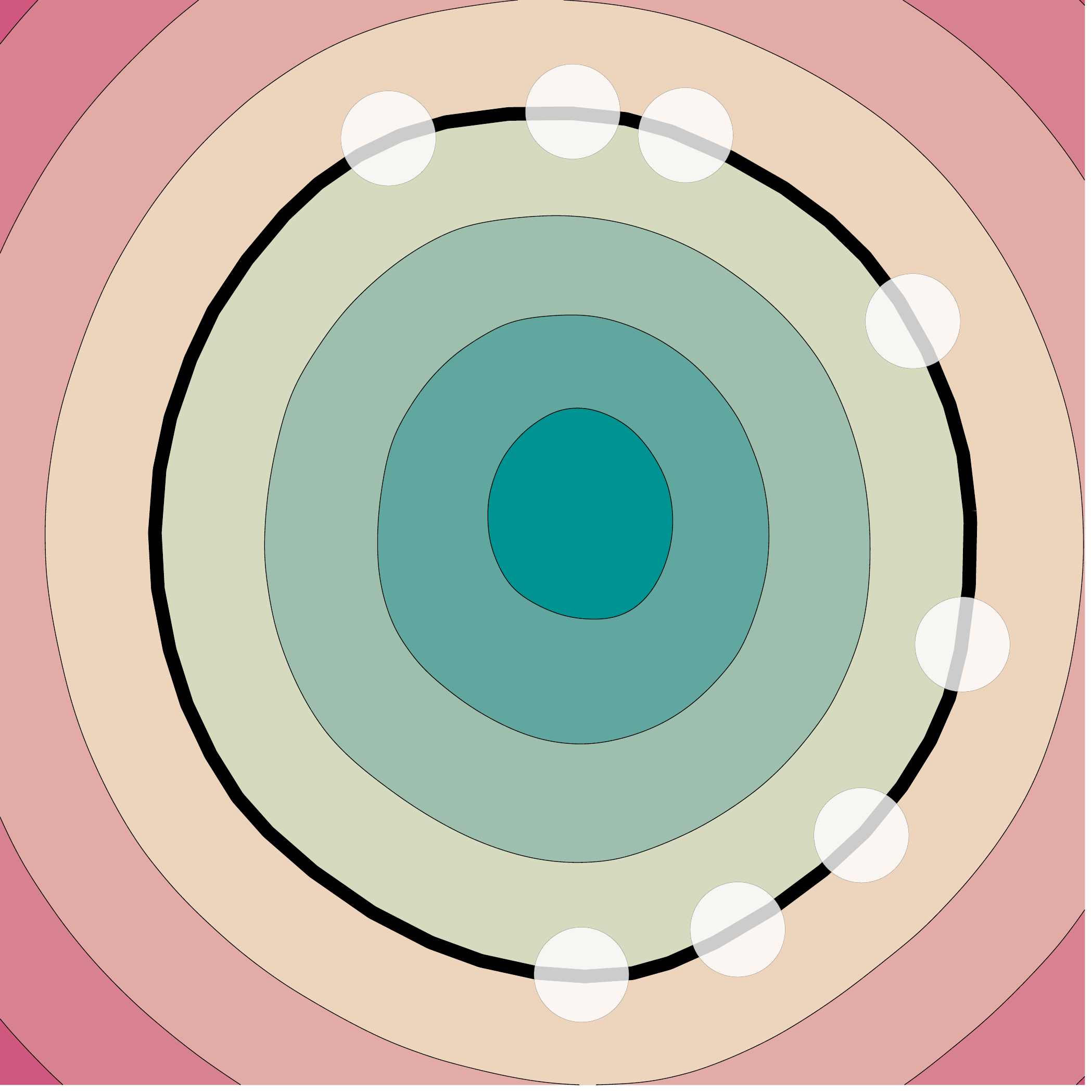} & 
      \hspace{-8pt} \includegraphics[width=0.31\columnwidth]{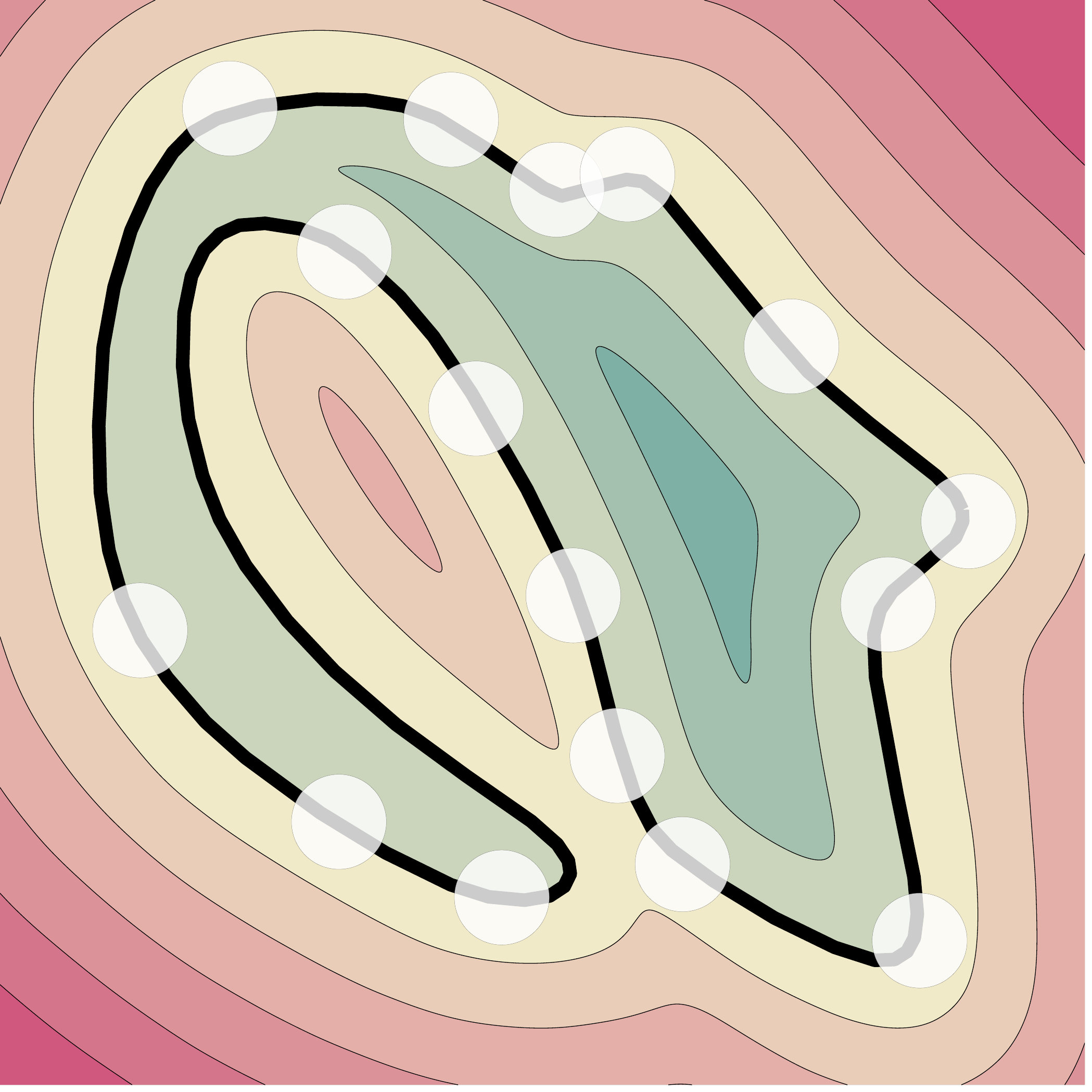} \\
      \hspace{-8pt} \includegraphics[width=0.31\columnwidth]{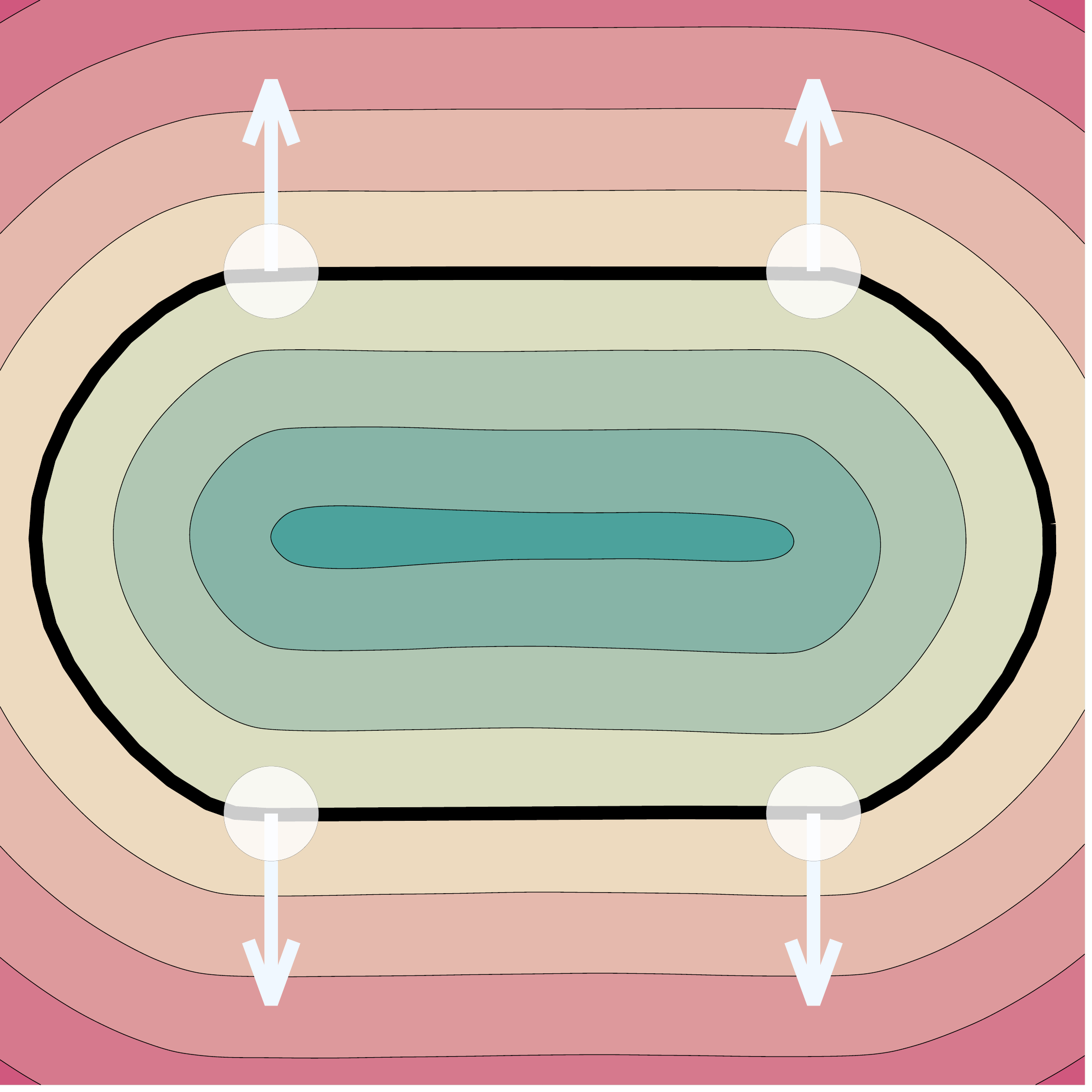} & 
      \hspace{-8pt} \includegraphics[width=0.31\columnwidth]{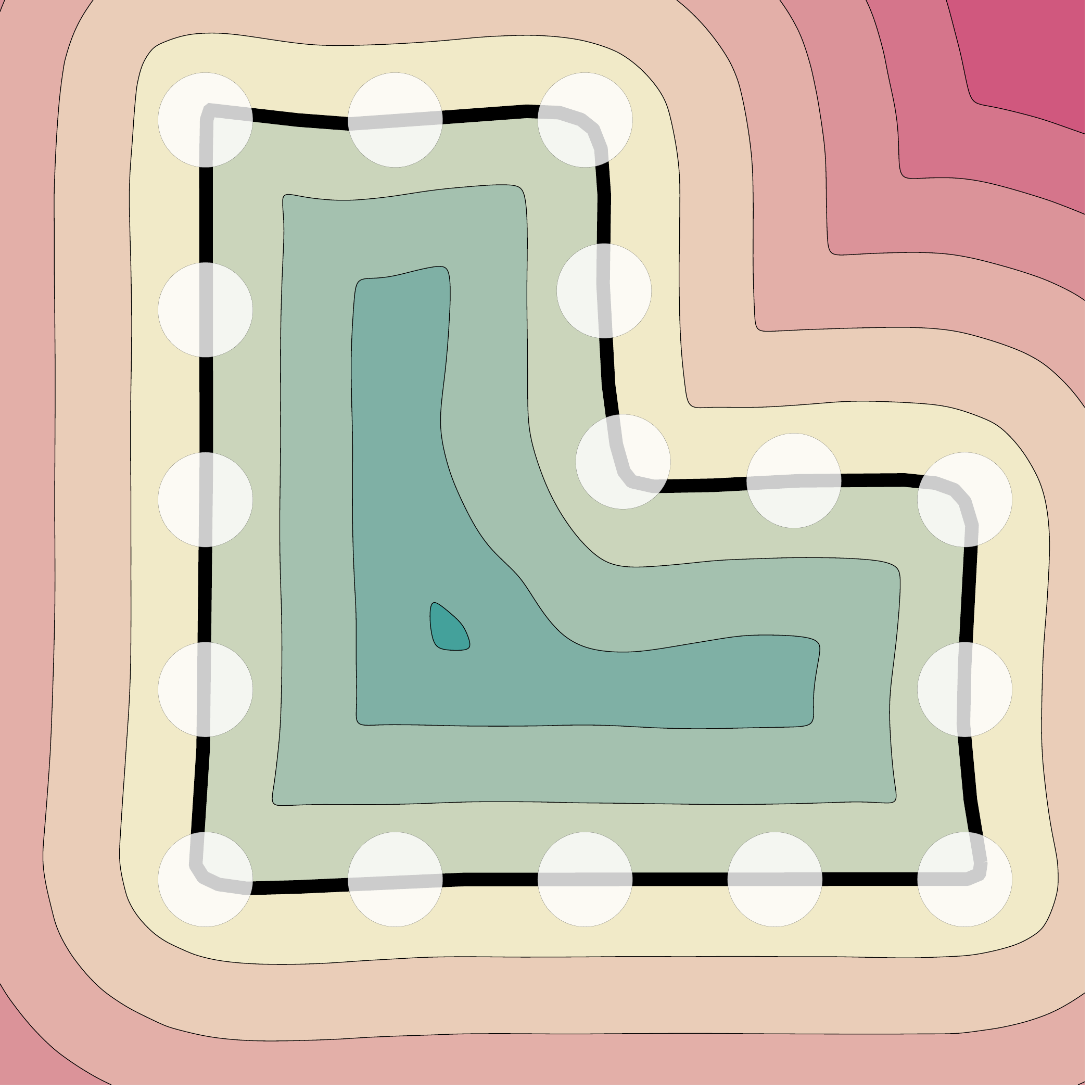} & 
      \hspace{-8pt} \includegraphics[width=0.31\columnwidth]{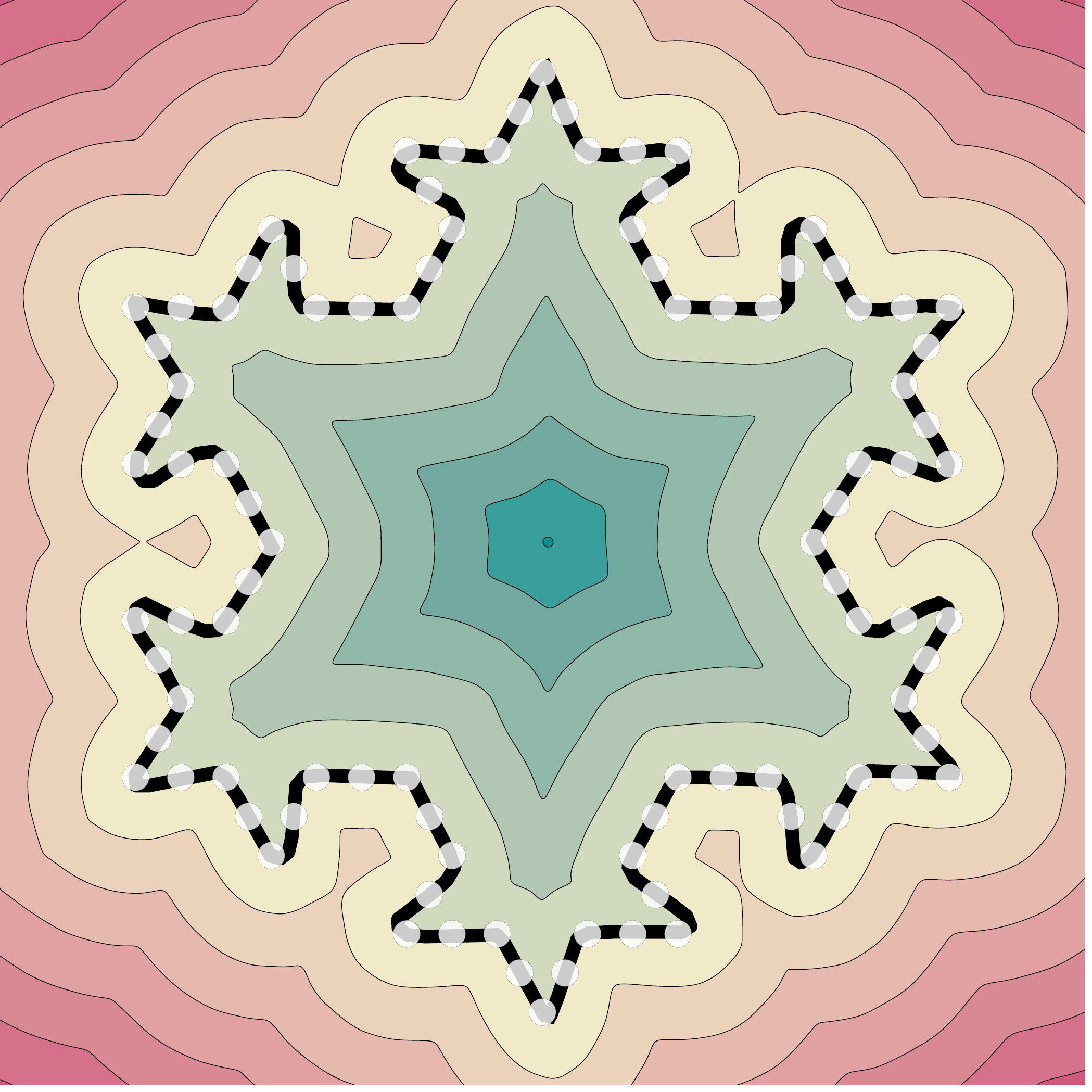} \vspace{-5pt} \\
\end{tabular}
    \caption{Learning curves from 2D point clouds (white disks) using our method; black lines depict the zero level sets of the trained neural networks, $\gM$. The implicit geometric regularization drives the optimization to reach plausible explanation of the data. }
    
    \label{fig:2d}
\vspace{-10pt}
\end{figure}

\section{Introduction}
Recently, level sets of neural networks have been used to represent 3D shapes \cite{park2019deepsdf,atzmon2019controlling,chen2019learning,mescheder2019occupancy}, \ie,
\begin{equation}\label{e:gM}    
\gM = \set{\vx\in \Real^3 \ \vert \ f(\vx;\theta)=0}, 
\end{equation}
where $f:\Real^{3}\times \Real^m \too \Real$ is a multilayer perceptron (MLP); we call such representations \emph{implicit neural representations}. Compared to the more traditional way of representing surfaces via implicit functions defined on volumetric grids \cite{wu2016learning,choy20163d,dai2017shape,stutz2018learning}, neural implicit representations have the benefit of relating the degrees of freedom of the network (\ie, parameters) directly to the shape rather than to the fixed discretization of the ambient 3D space. 
So far, most previous works using implicit neural representations computed $f$ with 3D supervision; that is, by comparing $f$ to a known (or pre-computed) implicit representation of some shape. \cite{park2019deepsdf} use a regression loss to approximate a pre-computed signed distance functions of shapes; \cite{chen2019learning,mescheder2019occupancy} use classification loss with pre-computed occupancy function. 

In this work we are interested in working directly with raw data: Given an input point cloud $\gX=\set{\vx_i}_{i\in I} \subset \Real^3$, with or without normal data, $\gN=\set{\vn_i}_{i\in I}\subset \Real^3$, our goal is to compute $\theta$ such that $f(\vx;\theta)$ is approximately the signed distance function to a plausible surface $\gM$ defined by the point data $\gX$ and normals $\gN$.

Some previous works are constructing implicit neural representations from raw data. In \cite{atzmon2019controlling} no 3D supervision is required and the loss is formulated directly on the zero level set $\gM$; iterative sampling of the zero level set is required for formulating the loss. In a more recent work, \cite{atzmon2020sal} use unsigned regression to introduce good local minima that produces useful implicit neural representations, with no 3D supervision and no zero level set sampling. Both of these works, however, explicitly enforce some regularization on the zero level set. Another possible solution is to compute, as a pre-training stage, some implicit representation of the surface using a classical surface reconstruction technique and use one of the previously mentioned methods to construct the neural implicit representation. This approach has two drawbacks: First, finding implicit surface representation from raw data is a notoriously hard \cite{berger2017survey}; second, decoupling the reconstruction from the learning stage would hinder \emph{collective} learning and reconstruction of shapes. For example, information from one shape will not be used to improve reconstruction of a different, yet a similar shape; nor consistent reconstructions will be produced. 
\begin{figure}
    \centering
    \begin{tabular}{c}
     \includegraphics[width=1.0\columnwidth]{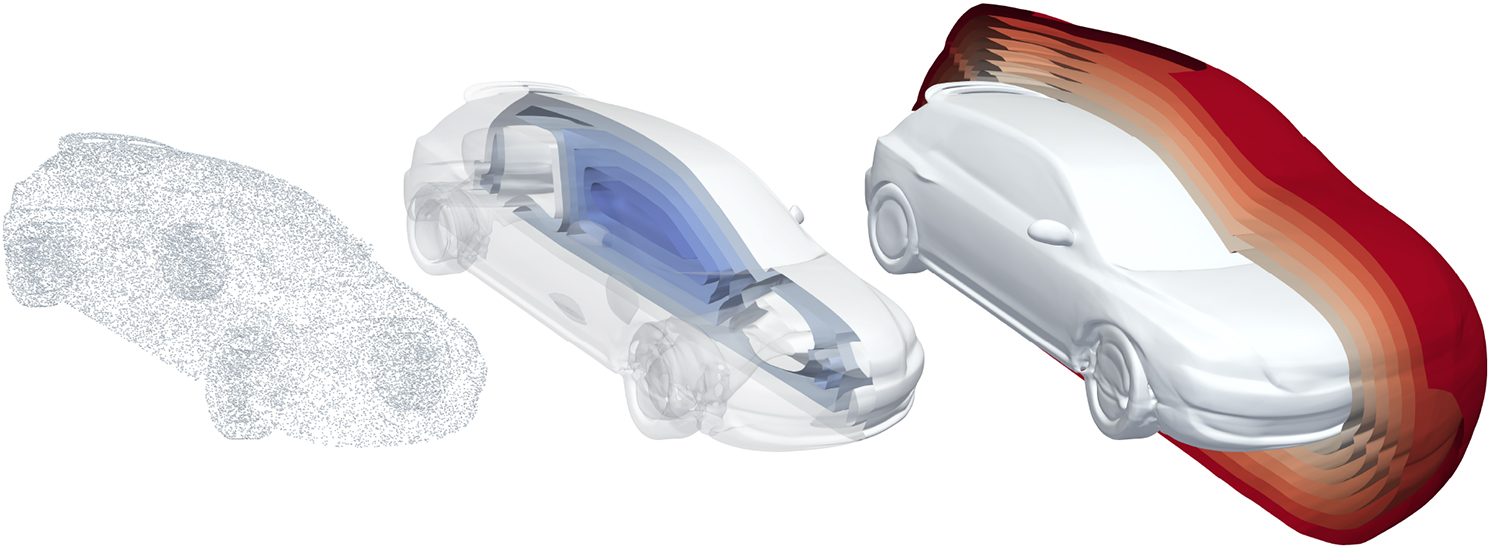}\vspace{-6pt}
\end{tabular}
    \caption{The level sets of an MLP (middle and right) trained with our method on an input point cloud (left); positive level sets are in red; negative are in blue; the zero level set, representing the approximated surface is in white. }
    \label{fig:car_levelset}
    \vspace{-13pt}
\end{figure}

The goal of this paper is to introduce a novel approach for learning neural shape representations directly from raw data using \emph{implicit geometric regularization}. We show that state of the art implicit neural representations can be achieved without 3D supervision and/or a direct loss on the zero level set $\gM$. As it turns out, stochastic gradient optimization of a simple loss that fits an MLP $f$ to a point cloud data $\gX$, with or without normal data $\gN$, while encouraging unit norm gradients $\nabla_\vx f$, consistently reaches good local minima, favoring smooth, yet high fidelity, zero level set surfaces $\gM$ approximating the input data $\gX$ and $\gN$. Figure \ref{fig:2d} shows several implicit neural representation of 2D data computed using our method; note that although there is an infinite number of solutions with neural level sets interpolating the input data, the optimization reaches solutions that provide natural and intuitive reconstructions. Figure \ref{fig:car_levelset} shows the level sets of an MLP trained with our method from a point cloud shown on the left.  

The preferable local minima found by the optimization procedure could be seen as a geometric version of the well known \emph{implicit regularization} phenomenon in neural network optimization \cite{neyshabur2014search,neyshabur2017implicit,soudry2018implicit}. Another, yet different, treatment of geometric implicit regularization was discussed in \cite{williams2019deep} where reducing an entropic regularization in the loss still maintains consistent and smooth neural chart representation. 

Although we do not provide a full theory supporting the implicit geometric regularization phenomenon, we analyze the linear case, which is already non-trivial due to the non-convex unit gradient norm term. We prove that if the point cloud $\gX$ is sampled (with small deviations) from a hyperplane $\gH$ and the initial parameters of the linear model are randomized, then, with probability one, gradient descent converges to the (approximated) signed distance function of the hyperplane $\gH$, avoiding bad critical solutions. We call this property \emph{plane reproduction} and advocate it as a useful geometric manifestation of implicit regularization in neural networks.

In practice, we perform experiments with our method, building implicit neural representations from point clouds in 3D and learning collections of shapes directly from raw data. Our method produces state of the art surface approximations, showing significantly more detail and higher fidelity compared to alternative techniques. Our code is available at \url{https://github.com/amosgropp/IGR}.

In summary, our paper's main contribution is two-fold: 
\begin{itemize}
    \item Suggesting a new paradigm, building on implicit geometric regularization, for computing high fidelity implicit neural representations, directly from raw data.
    \item Providing a theoretical analysis for the linear case showing gradient descent of the suggested loss function avoids bad critical solutions.
\end{itemize}

\section{Method}
Given an input point cloud $\gX=\set{\vx_i}_{i\in I} \subset \Real^3$, with or without normal data, $\gN=\set{\vn_i}_{i\in I}\subset \Real^3$, our goal is to compute parameters $\theta$ of an MLP $f(\vx;\theta)$, where $f:\Real^3\times \Real^m \too \Real$, so that it approximates a signed distance function to a plausible surface $\gM$ defined by the point cloud $\gX$ and normals $\gN$. 

We consider a loss of the form 
\begin{equation}\label{e:loss}
    \ell(\theta) = \ell_{\gX}(\theta) + \lambda\E_\vx\big (\norm{\nabla_\vx f(\vx;\theta)}-1\big )^2,
\end{equation}
where $\lambda>0$ is a parameter, $\norm{\cdot}=\norm{\cdot}_2$ is the euclidean $2$-norm, and  $$\ell_{\gX}(\theta)=\frac{1}{|I|}\sum_{i\in I}\big ( \abs{ f(\vx_i;\theta) } + \tau \norm{\nabla_\vx f(\vx_i;\theta) - \vn_i} \big )$$ encourages $f$ to vanish on $\gX$ and, if normal data exists (\ie, $\tau=1$), that $\nabla_\vx f$ is close to the supplied normals $\gN$. 

The second term in \eqref{e:loss} is called the \emph{Eikonal term} and encourages the gradients $\nabla_\vx f$ to be of unit 2-norm. The expectation is taken with respect to some probability distribution $\vx\sim \gD$ in $\Real^3$. 

\pagebreak
The motivation for the Eikonal term stems from the Eikonal partial differential equation: A solution $f$ (in the sense of \cite{crandall1983viscosity}) to the Eikonal equation, \ie, 
\begin{equation}\label{e:eikonal}
    \norm{\nabla_\vx f(\vx)} = 1,     
\end{equation}
where $f$ vanishes on $\gX$, with gradients $\gN$, will be a signed distance function and a global minimum of the loss in \eqref{e:loss}. Note however, that for point boundary data $\gX,\gN$ the solution to \eqref{e:eikonal} is not unique.

\paragraph{Implicit geometrical regularization.}
When optimizing the loss in \eqref{e:loss}, two questions immediately emerge: First, why a critical point $\theta_*$ that is found by the optimization algorithm leads $f(\vx;\theta_*)$ to be a signed distance function? Usually, adding a quadratic penalty with a finite weight is not guaranteed to provide \emph{feasible} critical solutions \cite{nocedal2006numerical}, \ie, solutions that satisfy the desired constraint (in our case, unit length gradients). Second, even if the critical solution found is a signed distance function, why would it be a plausible one? There is an infinite number of signed distance functions vanishing on arbitrary discrete sets of points $\gX$ with arbitrary normal directions $\gN$. 

Remarkably, optimizing \eqref{e:loss} using stochastic gradient descent (or a variant thereof) results in solutions that are close to a signed distance function with a smooth and surprisingly plausible zero level set. For example, Figure \ref{fig:2d} depicts the result of optimizing \eqref{e:loss} in the planar case ($d=2$) for different input point clouds $\gX$, with and without normal data $\gN$; the zero level sets of the optimized MLP are shown in black.  Note that the optimized MLP is close to a signed distance function as can be inspected from the equidistant level sets. 

\begin{wrapfigure}[7]{r}{0.3\columnwidth}
     \centering
     \vspace{-8pt}
     \hspace{-10pt}
     \includegraphics[width=0.31\columnwidth]{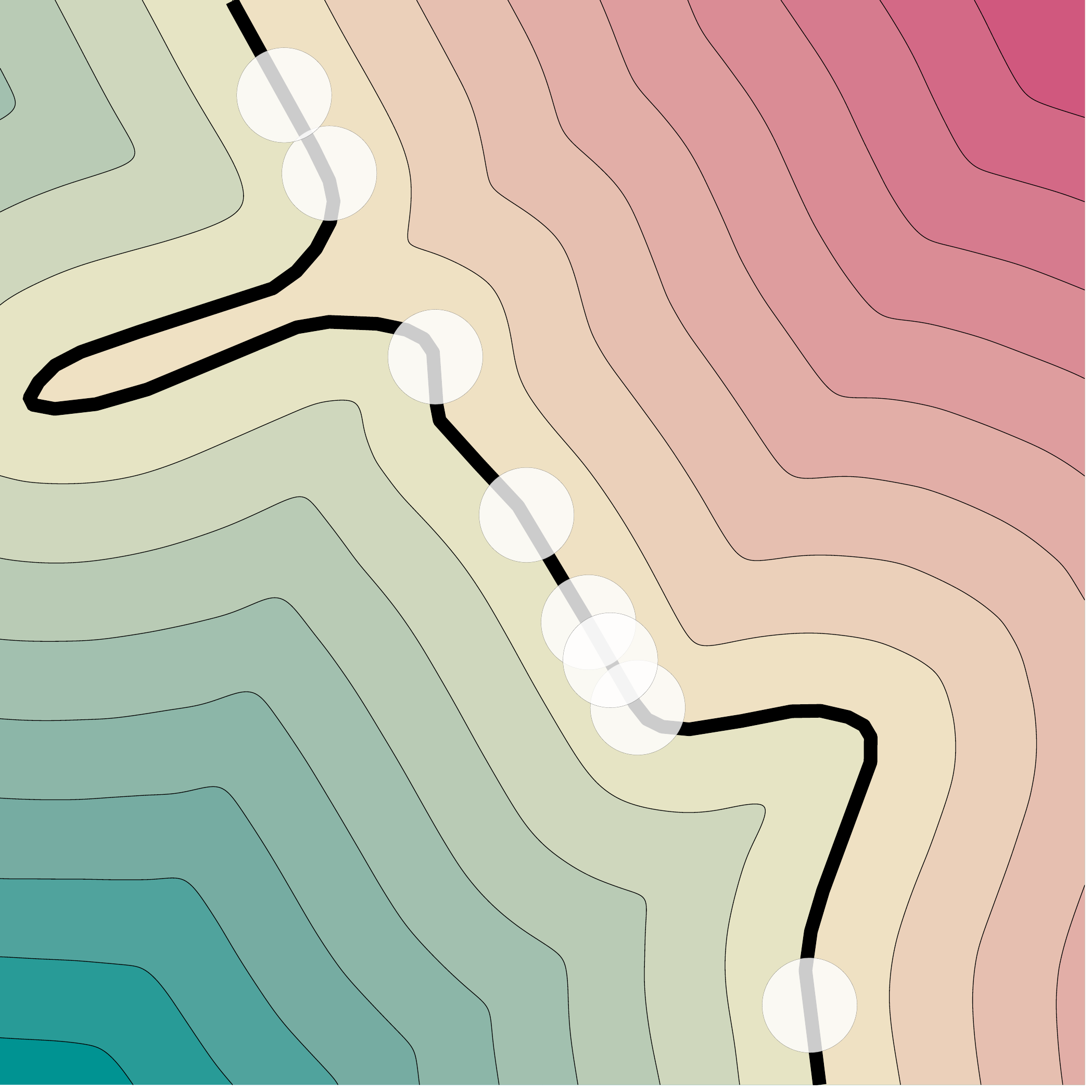}
\end{wrapfigure}
The inset shows an alternative signed distance function that would achieve zero loss in \eqref{e:loss} for the top-left example in Figure \ref{fig:2d}, avoided by the optimization algorithm that chooses to reconstruct a straight line in this case. 
This is a consequence of the \emph{plane reproduction} property. In Section \ref{s:analysis} we provide a theoretical analysis of the plane reproduction property for the linear model case $f(\vx;\vw)=\vw^T\vx$ and prove that if $\gX$ is sampled with small deviations from a hyperplane $\gH$, then gradient descent provably avoids bad critical solutions and converges with probability one to the approximated signed distance function to $\gH$.

\paragraph{Computing gradients.}

Incorporating the gradients $\nabla_\vx f$ in the loss above could be done using numerical estimates of the gradient. A better approach is the following: every layer of the MLP $f$ has the form  $\vy^{\ell+1}=\sigma(\mW\vy^{\ell}+\vb)$, where $\sigma:\Real\too\Real$ is a non-linear differentiable activation (we use Softplus) applied entrywise, and $\mW$, $\vb$ are the layer's learnable parameters. Hence, by the chain-rule the gradients satisfy
\begin{equation}\label{e:grads}
  \nabla_\vx \vy^{\ell+1}=\diag\parr{\sigma'\parr{\mW\vy^{\ell}+\vb}}\mW\nabla_\vx \vy^{\ell},  
\end{equation}
\begin{wrapfigure}[6]{r}{0.45\columnwidth}
     \centering
     \vspace{-10pt}\hspace{-8pt}
     \includegraphics[width=0.42\columnwidth]{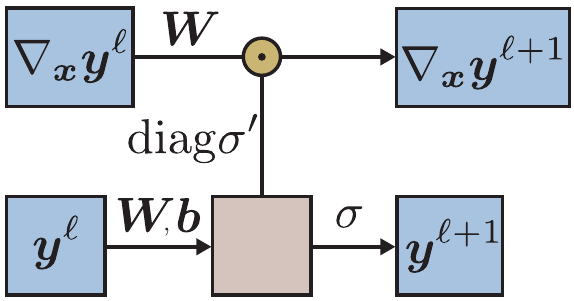}\vspace{-10pt}
     \caption{$(f,\nabla_\vx f)$ network.}\label{fig:grad}
\end{wrapfigure}
where $\diag(\vz)$ is arranging its input vector $\vz\in\Real^k$ on the diagonal of a square matrix $\Real^{k\times k}$ and $\sigma'$ is the derivative of $\sigma$. \Eqref{e:grads} shows that $\nabla_\vx f(\vx;\theta)$ can be constructed as a neural-network in conjunction with $f(\vx;\theta)$, see Figure \ref{fig:grad} for illustration of a single layer of a network computing both $f(\vx;\theta)$ and $\nabla_\vx f(\vx;\theta)$. In practice, implementing $\nabla_\vx f(\vx;\theta)$ using \emph{Automatic Differentiation} packages seems to be a simple alternative.

\section{Previous work and discussion}
\subsection{Deep learning of 3D shapes}
There are numerous deep learning based methods applied to 3D shapes. Here we review the main approaches, emphasizing the 3D data representation being used and discuss relations to our approach.
\paragraph{\emph{RGVF: regular grid volumetric function}.}
Maybe the most popular representation for 3D shapes is via a scalar function defined over a regular volumetric grid (RGVF); the shape is then defined as the zero level set of the function. One option is to use an indicator function \cite{choy20163d,girdhar2016learning,wu2016learning,yan2016perspective,tulsiani2017multi,yang20173d}. This is a natural generalization of images to 3D, thus enabling easy adaptation of the successful Convolutional Neural Networks (CNNs) architectures. \citet{tatarchenko2017octree} addressed the computation efficiency challenge stemming from the cubic grid size. In \cite{wu2016learning}, a variant of generative adversarial network \cite{goodfellow2014generative} is proposed for 3D shapes generation. More generally, researchers have suggested using other RGVFs \cite{dai2017shape,riegler2017octnetfusion,stutz2018learning,liao2018deep,michalkiewicz2019deep,jiang2019sdfdiff}. In \cite{dai2017shape}, the RGVF models a signed distance function to a shape, where an encoder-decoder network is trained for the task of shape completion. Similar RGVF representation is used in \cite{jiang2019sdfdiff} for the task of multi-view 3D reconstruction. However, they learn the signed distance function representation based on differentiable rendering technique, without requiring pre-training 3D supervision. Another implicit RGVF representation is used in \cite{michalkiewicz2019deep} for the task of image to shape prediction; they introduced a loss function inspired by the \emph{level set method} \cite{osher2004level} based surface reconstruction techniques \cite{zhao2000implicit, zhao2001fast}, operating directly on level sets of RGVF.

The RGVF has several shortcomings compared to the implicit neural representations in general and our approach in particular: (i) The implicit function is defined only at grid points, requiring an interpolation scheme to extend it to interior of cells; normal approximation would require divided differences; (ii) It requires cubic-size grid and is not data-dependent, \ie, it does not necessarily adhere to the specific geometry of the shapes one wishes to approximate.
(iii) A version of the Eikonal regularization term (second term in \eqref{e:loss}) was previously used with RGVF representations in \citet{michalkiewicz2019deep, jiang2019sdfdiff}. These works incorporated Eikonal regularization as a normalization term, in combination with other explicit reconstruction and/or regularization terms. The key point in our work is that the Eikonal term \emph{alone} can be used for (implicitly) regularizing the zero level set. Furthermore, it is not clear whether the implicit regularization property holds, and to what extent, in the fixed volumetric grid scenario. Lastly, in the RGVF setting 
the gradients are computed using finite differences or via some fixed basis function.



\paragraph{Neural parametric surfaces.} Surfaces can also be described explicitly as a collection of charts (in an atlas), where each chart  $f:\Real^2\too \Real^3$ is a local parametrization. However, finding a consistent atlas covering of the surface could be challenging. \cite{groueix2018,deprelle2019learning} suggested modeling charts as MLPs; \cite{williams2019deep} focused on an individual surface reconstruction, introducing a method to improve atlas consistency by minimizing disagreement between neighboring charts. Some works have considered global surface parametrizations \cite{sinha2016deep,sinha2017surfnet,maron2017convolutional}. Global parametrizations produce consistent coverings, however at the cost of introducing parametrizations with high distortion. 

The benefit of parametric representations over implicit neural representations is that the shape can be easily sampled; the main shortcoming is that it is very challenging to produce a set of perfectly overlapping charts, a property that holds by construction for implicit representations.

\paragraph{Hybrid representations.} \citet{deng2019cvxnets,chen2019bsp,williams2019voronoinet} suggested representations based on the fact that every solid can be decomposed into a union of convex sets. As every convex set can be represented either as an intersection of hyper-planes or a convex-hull of vertices, transforming between a shape explicit and implicit representation can be done  relatively easily. 

\subsection{Solving PDEs with neural networks}
Our proposed training objective \eqref{e:loss} can be interpreted as a quadratic penalty method for solving the \emph{Eikonal Equation} (\eqref{e:eikonal}). However, the Eikonal equation is a non-linear wave equation and requires boundary conditions of the form $$f(\vx)=0, \quad \vx\in\partial \Omega,$$ where $\Omega\subset \Real^3$ is a well-behaved open set with boundary, $\partial \Omega$. The viscous solution \cite{crandall1983viscosity,crandall1984some} to the Eikonal equation is unique in this case. Researchers are trying to utilize neural networks to solve differential equations \cite{yadav2015introduction}. Perhaps, more related to our work are \citet{sirignano2018dgm,raissi2017physics,raissi2017physicst} suggesting deep neural networks as a non-linear function space for approximating PDE solutions. Their training objective, similarly to ours, can be seen as a penalty version of the original PDE. 

The main difference from our setting is that in our case the boundary conditions of the Eikonal equation do not hold, as we use a \emph{discrete} set of points $\gX$. In particular, any well-behaved domain $\Omega$ that contains $\gX$ in its boundary, \ie, $\gX\subset \partial\Omega$ would form a valid initial condition to the Eikonal equation with $\partial\Omega$ as the zero level set of its solution. Therefore, from PDE theory point of view, the problem \eqref{e:loss} is trying to solve is ill-posed with infinite number of solutions. The main observation of this paper is that implicit geometry regularization in fact \emph{chooses a favorable solution} out of this solution space. 

\section{Analysis of the linear model and plane reproduction }\label{s:analysis}

In this section we provide some justification for using the loss in \eqref{e:loss} by analyzing the linear network case. That is, we consider a linear model $f(\vx;\vw)=\vw^T\vx$ where the loss in \eqref{e:loss} takes the form 
\begin{equation}\label{e:loss_linear}
\ell(\vw) = \sum_{i\in I} \parr{\vw^T \vx_i}^2 + \lambda \parr{\norm{\vw}^2-1}^2,
\end{equation}
where for simplicity we used squared error and removed the term involving normal data; we present the analysis in $\Real^d$ rather than $\Real^3$.  

We are concerned with the \emph{plane reproduction} property, namely, assuming our point cloud $\gX$ is sampled approximately from a plane $\gH$, then gradient descent of the loss in \eqref{e:loss_linear} converges to the approximate signed distance function to $\gH$. 

To this end, assume our point cloud data $\gX=\set{\vx_i}_{i\in I}$ satisfies $\vx_i=\vy_i+\veps_i$, where  $\vy_i$, span some $d-1$-dimension hyperplane $\gH\subset\Real^d
$ that contains the origin, and $\veps_i$ are some small deviations satisfying $\norm{\veps_i}<\eps$. We will show that: (i) For $\lambda > \frac{c\eps}{2}$, where $c$ is a constant depending on $\vy_i$, the loss in \eqref{e:loss_linear} has two global minima that correspond to the (approximated) signed distance functions to $\gH$ (note there are two signed distance functions to $\gH$ differing by a sign); the rest of its critical points are either saddle points or local maxima. (ii) Using the characterization of critical points and known properties of gradient descent \cite{ge2015escaping,lee2016gradient} we can prove that applying gradient descent 
\begin{equation}\label{e:gradient_descent}
    \vw^{k+1}=\vw^k-\alpha \nabla_\vw \ell(\vw^k),
\end{equation}
from a random initialization $\vw^0$ and sufficiently small step-size $\alpha>0$, will converge, with probability one,  
to one of the global minima, namely to the approximated signed distance function to $\gH$.

\paragraph{Change of coordinates.} We perform a change of coordinates: Let $\sum_{i\in I}\vx_i\vx_i^T=\mU\mD\mU^T$, $\mU=(\vu_1,\ldots,\vu_d)$, $D=\diag(\lambda_1,\ldots,\lambda_d)$ be a spectral decomposition, and $0\leq \lambda_1 < \lambda_{2}\leq \cdots\leq \lambda_d$. Using perturbation theory for eigenvalues and eigenvectors of hermitian matrices one proves:
\begin{lemma}\label{lem:perturb}
There exists constants $c,c'>0$ depending on $\set{\vy_i}_{i\in I}$ so that $\lambda_1 \leq c\eps$ and $\norm{\vu_1-\vn}\leq c'\eps$, where $\vn$ is a normal direction to $\gH$. 
\end{lemma}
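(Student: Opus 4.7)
The plan is to view $\rmB := \sum_{i \in I} \vx_i \vx_i^T$ as a perturbation of $\rmA := \sum_{i \in I} \vy_i \vy_i^T$ and then invoke two standard matrix perturbation results: Weyl's inequality for eigenvalues and the Davis--Kahan $\sin\Theta$ theorem for eigenvectors. Expanding $\vx_i\vx_i^T=(\vy_i+\veps_i)(\vy_i+\veps_i)^T$ gives $\rmB=\rmA+\rmE$ with
$$\rmE = \sum_{i\in I}\left(\vy_i \veps_i^T + \veps_i \vy_i^T + \veps_i \veps_i^T\right).$$
Setting $M=\max_{i\in I}\norm{\vy_i}$, the triangle inequality for the operator norm together with $\norm{\veps_i}<\eps$ yields $\norm{\rmE}\le (2\abs{I}M+\abs{I}\eps)\,\eps\le \bar c\,\eps$ for a constant $\bar c$ depending only on $\{\vy_i\}$ (and on $\eps$ being bounded, which is implicit in the lemma).

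Because $\{\vy_i\}$ span $\gH$ and $\vn\perp\gH$, the symmetric matrix $\rmA$ is positive semidefinite with $\rmA\vn=\zero$ and strictly positive definite on $\gH$. Its eigenvalues are therefore $0=\mu_1<\mu_2\le\cdots\le\mu_d$, with $\pm\vn$ the unit eigenvector for $\mu_1$, and the spectral gap $\delta:=\mu_2>0$ is a positive constant depending only on $\{\vy_i\}$. Weyl's inequality now gives $\lambda_1=\abs{\lambda_1-\mu_1}\le\norm{\rmE}\le \bar c\,\eps$, settling the first bound with $c=\bar c$.

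For the eigenvector bound, I apply the Davis--Kahan $\sin\Theta$ theorem to the one-dimensional eigenspaces of $\rmA$ and $\rmB$ associated with their smallest eigenvalues. Once $\eps$ is small enough that $\norm{\rmE}\le\delta/2$, the angle $\theta\in[0,\pi]$ between $\vu_1$ and $\vn$ satisfies $\sin\theta\le 2\norm{\rmE}/\delta\le (2\bar c/\delta)\,\eps$. Choosing the sign of $\vu_1$ so that $\vu_1^T\vn\ge 0$ places $\theta\in[0,\pi/2]$, where $\sin$ is increasing, and
$$\norm{\vu_1-\vn}=2\sin(\theta/2)\le 2\sin\theta\le (4\bar c/\delta)\,\eps,$$
which is the second claim with $c'=4\bar c/\delta$.

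The only real subtlety is that the lemma writes $\vu_1$ and $\vn$ as if canonically defined, whereas both are determined only up to sign; the statement must be read under the sign choice $\vu_1^T\vn\ge 0$ above. Beyond that, the proof is routine: bound $\norm{\rmE}=O(\eps)$, observe that $\rmA$ has a positive spectral gap $\delta$ at $0$ depending only on $\{\vy_i\}$, and cite Weyl and Davis--Kahan.
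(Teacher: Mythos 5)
Your proof follows essentially the same route as the paper's: the same decomposition $\sum_i \vx_i\vx_i^T = \sum_i\vy_i\vy_i^T + E$ with $\norm{E}=O(\eps)$, followed by Hermitian eigenvalue perturbation for $\lambda_1$ and simple-eigenvector perturbation for $\vu_1$; the paper simply cites Stewart--Sun (Sections IV:4 and V:2.2--2.3) where you invoke the equivalent Weyl and Davis--Kahan $\sin\Theta$ theorems. Your explicit bound on $\norm{E}$ and the remark that $\vu_1$ must be fixed by the sign convention $\vu_1^T\vn\ge 0$ are details the paper leaves implicit, but the underlying argument is the same.
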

\begin{proof}[Proof (Lemma \ref{lem:perturb}).]
Let $\sum_{i\in I} \vx_i\vx_i^T=\sum_{i\in I} \vy_i\vy_i^T + \vy_i\veps_i^T+\veps_i\vy_i^T + \veps_i\veps_i^T=\sum_{i\in I} \vy_i\vy_i^T + E$. Now use hermitian matrix eigenvalue perturbation theory, \eg, \cite{stewart1990matrix}, Section IV:4, and perturbation theory for simple eigenvectors, see \cite{stewart1990matrix} Section V:2.2-2.3, to conclude.   
\end{proof}

Then, performing the change of coordinates,  $\vq=U^T\vw$, in \eqref{e:loss_linear} leads to the diagonalized form
\begin{equation}\label{e:loss_linear_changed}
  \ell(\vq)=\vq^T D \vq + \lambda \parr{\norm{\vq}^2-1}^2,  
\end{equation}
where  $\norm{\vw}=\norm{\vq}$ due to the invariance of the euclidean norm to orthogonal transformations. The plane $\gH$ in the transformed coordinates is $\ve_1^\perp$, where $\ve_1\in\Real^d$ is the first standard basis vector. 
 
\paragraph{Classification of critical points.} Next we classify the critical points of our loss. The gradient of the loss in \eqref{e:loss_linear_changed} is
\begin{equation}\label{e:loss_linear_grad}
\nabla_\vq \ell(\vq)^T = 2\parr{D + 2\lambda(\norm{\vq}^2-1)I}\vq,
\end{equation}
where $I$ is the identity matrix. The Hessian takes the form
\begin{equation}\label{e:loss_linear_hess}
\nabla^2_\vq \ell(\vq) = 2D + 4\lambda\parr{\norm{\vq}^2-1}I + 8\lambda \vq\vq^T.
\end{equation}
We prove:
\begin{theorem}\label{thm:critical}
If $\lambda > \frac{\lambda_1}{2}$, then the loss in \eqref{e:loss_linear_changed} (equivalently, \eqref{e:loss_linear}) has at-least $3$ and at-most $2d+1$ critical points. Out of which, the following two correspond to the approximated signed distance functions to the plane $\ve_1^\perp$, and are global minima: $$\pm\vq=\pm\sqrt{1-\frac{\lambda_1}{2\lambda}}\ve_1.$$ The rest of the critical points are saddle points or local maxima. 
\end{theorem}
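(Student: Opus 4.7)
The plan is to set the gradient
\begin{equation*}
\nabla_\vq \ell(\vq) = 2\parr{D + 2\lambda(\norm{\vq}^2-1)I}\vq
\end{equation*}
to zero and read it componentwise: for each $i$, either $q_i = 0$ or $\lambda_i = 2\lambda(1 - \norm{\vq}^2)$. This splits the critical set into two families. The first is the origin $\vq = 0$. The second arises by choosing an index set $S$ on which $\lambda_i = 2\lambda(1-\norm{\vq}^2)$ holds; generically (when the $\lambda_i$'s are distinct) $\abs{S}=1$, giving the $2d$ candidate points $\pm\sqrt{1 - \lambda_i/(2\lambda)}\ve_i$, of which only those with $\lambda_i < 2\lambda$ are real. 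Since $\lambda > \lambda_1/2$, the pair for $i=1$ exists, so in total there are at least $3$ and at most $2d+1$ critical points.

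Next I would pinpoint the global minima by evaluating the loss on each critical point: $\ell(0) = \lambda$, and on the candidate direction $\pm\sqrt{1-\lambda_i/(2\lambda)}\ve_i$ one computes $\ell = \lambda_i - \lambda_i^2/(4\lambda)$. Since the scalar function $\mu \mapsto \mu - \mu^2/(4\lambda)$ is strictly increasing on $[0, 2\lambda]$, its minimum over the valid indices is attained at $i=1$; moreover $\lambda - (\lambda_1 - \lambda_1^2/(4\lambda)) = (2\lambda - \lambda_1)^2/(4\lambda) > 0$, so the pair at $i=1$ also beats $\vq=0$. This identifies $\pm\sqrt{1-\lambda_1/(2\lambda)}\ve_1$ as the two global minima stated in the theorem.

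The classification of the remaining critical points then follows by substituting into the Hessian
\begin{equation*}
\nabla^2_\vq \ell(\vq) = 2D + 4\lambda(\norm{\vq}^2-1)I + 8\lambda \vq\vq^T.
\end{equation*}
At $\pm\sqrt{1-\lambda_i/(2\lambda)}\ve_i$ with $i\ge 2$, one has $\norm{\vq}^2-1=-\lambda_i/(2\lambda)$, the rank-one term only hits the $(i,i)$ entry, and the Hessian becomes diagonal; its first diagonal entry $2\lambda_1 - 2\lambda_i$ is strictly negative by the assumption $\lambda_1 < \lambda_2 \le \lambda_i$, so these are saddles. At $\vq=0$ the Hessian is $2D - 4\lambda I$, whose first diagonal entry $2\lambda_1 - 4\lambda$ is strictly negative by $\lambda > \lambda_1/2$, so the origin cannot be a local minimum, hence it is a saddle or local maximum. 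For completeness, at the claimed global minima the same substitution gives a diagonal Hessian with first entry $8\lambda - 4\lambda_1 > 0$ and $j$-th entry $2\lambda_j - 2\lambda_1 > 0$ for $j\ge 2$, confirming strict positive definiteness and that the minima are isolated.

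The main obstacle is mostly bookkeeping rather than conceptual: specializing $\norm{\vq}^2$ correctly in the Hessian at each critical point, and leveraging the strict gap $\lambda_1 < \lambda_2$ (guaranteed by Lemma \ref{lem:perturb}, which makes $\lambda_1$ small while the remaining $\lambda_i$ stay bounded away from it) to turn the first diagonal entry of the Hessian into a strict negative at every non-global critical point, thereby ruling out local minima there. The only delicate case is a coincidence $\lambda_i = \lambda_j$ for some $i,j\ge 2$, which produces a sphere of critical points; the Hessian argument still shows each such point is a strict saddle, so the bound $2d+1$ should be read as a count of isolated critical points when the spectrum of $D$ is simple.
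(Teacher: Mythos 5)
Your proof is correct and follows essentially the same route as the paper's: find the critical points by setting the gradient to zero, then classify them via the Hessian. You add two refinements the paper's written proof omits. First, you explicitly treat $\vq=\zero$; the paper never evaluates the Hessian there, whereas you correctly compute its first diagonal entry $2\lambda_1-4\lambda<0$ under $\lambda>\lambda_1/2$ to rule it out as a minimum. Second, you flag the degeneracy $\lambda_i=\lambda_j$ for $i,j\ge 2$, under which the critical set contains a circle rather than being finite; since the paper only assumes $\lambda_1<\lambda_2\le\cdots\le\lambda_d$ (only $\lambda_1$ guaranteed simple), your caveat that ``at most $2d+1$'' should be read under a simple-spectrum assumption on $D$ is a legitimate correction, and you rightly note the Hessian still shows each such point is a strict saddle. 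You also identify the global minima by directly comparing $\ell(\zero)=\lambda$ with $\ell(\pm\vq_j)=\lambda_j-\lambda_j^2/(4\lambda)$, whereas the paper establishes $\pm\vq_1$ as local minima via a positive-definite Hessian and then invokes coercivity; both are fine, but your loss-comparison argument still implicitly needs $\ell(\vq)\to\infty$ as $\norm{\vq}\to\infty$ to know the global minimum is attained at a critical point at all, so that clause (which the paper states) should be added explicitly.
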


Before proving this theorem we draw some conclusions. The global minima in the original coordinate system are $\pm\vw=\pm{\scriptstyle\sqrt{1-\frac{\lambda_1}{2\lambda}}}\vu_1$ that correspond to the approximate signed distance function to $\gH$. Indeed, Lemma \ref{lem:perturb} implies that $\lambda_1/2\lambda\leq c\eps/2\lambda$ and $\norm{\vu_1-\vn}\leq c'\eps$. Therefore $$\norm{\vw-\vn}\leq \parr{\frac{c}{2\lambda}+c'}\eps,$$ where we used the triangle inequality and $\sqrt{1-s}\geq 1-s$ for $s\in[0,1]$. This shows that $\lambda>0$ should be chosen sufficiently large compared to $\eps$, the deviation of the data from planarity. In the general MLP case, one could consider this analysis locally noting that locally an MLP is approximately linear and the deviation from planarity is quantified locally by the \emph{curvature} of the surface, \eg, the two principle surface curvatures $\sigma_1,\sigma_2$ that represent the reciprocal radii of two osculating circles \cite{do2016differential}.

\begin{proof}[Proof (Theorem \ref{thm:critical}).]
First, let us find all critical points. Clearly, $\vq=\zero$ satisfies $\nabla_\vq \ell(\zero)=0$. Now if $\vq\ne \zero$ then the only way \eqref{e:loss_linear_grad} can vanish is if $2\lambda(\norm{\vq}^2-1)=-\lambda_j$, for some $j\in [d]$. That is, $\norm{\vq_j}^2=1-\frac{\lambda_j}{2\lambda}$, and if the r.h.s.~is strictly greater than zero then $\vq_j=\sqrt{1-\frac{\lambda_j}{2\lambda}}\ve_j$ is a critical point, where $\ve_j$ is the $j$-th standard basis vector in $\Real^d$. Note that also 
$-\vq_j$, $j\in[d]$ are critical points. So in total we have found at-least $3$ and up-to $2d+1$ critical points: $\zero,\pm\vq_j$, $j\in[d]$. 

Plugging these critical solutions into the Hessian formula, \eqref{e:loss_linear_hess} we get 
$$\nabla^2_\vq\ell(\pm \vq_j)=2\diag(\lambda_1-\lambda_j,{\scriptscriptstyle\ldots},\lambda_d-\lambda_j)+8(\lambda-\frac{\lambda_j}{2}) \ve_j\ve_j^{T}.$$
From this equation we see that if $\lambda>\frac{\lambda_1}{2}$ then $\pm\vq_1$ are local minima; and for all $\lambda>0$, $\pm \vq_j$, $j\geq 2$ are saddle points or local maxima (in particular, $\vq_d$ for small $\lambda$); \ie, the Hessian $\nabla^2_\vq\ell(\vq_j)$, $j\geq 2$, has at-least one strictly negative eigenvalue. Since $\ell(\vq)\too\infty$ as $\norm{\vq}\too \infty$ we see that $\pm\vq_1$ are also global minima. 
\end{proof}

\paragraph{Convergence to global minima.} Given the classification of the critical points in Theorem \ref{thm:critical} we can prove that  
\begin{theorem}\label{thm:converge}
The gradient descent in \eqref{e:gradient_descent}, with sufficiently small step-size $\alpha>0$ and a random initialization $\vw^0$ will avoid the bad critical points of the loss function $\ell$, with probability one.
\end{theorem}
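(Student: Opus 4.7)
The plan is to apply the stable manifold / strict saddle theorem of Lee et al.\ (2016), which states that for a twice continuously differentiable function, the gradient descent map $g_\alpha(\vw)=\vw-\alpha\nabla_\vw \ell(\vw)$ (for sufficiently small step size $\alpha$) is a local diffeomorphism, and consequently the set of initializations whose trajectories converge to any strict saddle point has Lebesgue measure zero. Since Theorem \ref{thm:critical} already enumerates the critical points (at most $2d+1$ of them) and identifies the global minima $\pm\vq_1$, it remains to verify two things: (i) every non-global-minimum critical point is a \emph{strict saddle} in the sense that its Hessian admits at least one strictly negative eigenvalue, and (ii) the technical hypotheses of the stable manifold theorem are satisfied in our setting.

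For step (i), I would plug each critical point into the Hessian formula \eqref{e:loss_linear_hess}. At $\vq=\zero$, the Hessian is $2D-4\lambda I$ with eigenvalues $2\lambda_j-4\lambda$; the smallest is $2\lambda_1-4\lambda<0$ by the assumption $\lambda>\lambda_1/2$, so the origin is a strict saddle (or local max). At $\pm\vq_j$ with $j\geq 2$, the Hessian computed in the proof of Theorem \ref{thm:critical} is
\[
2\diag(\lambda_1-\lambda_j,\lambda_2-\lambda_j,\ldots,\lambda_d-\lambda_j)+8\bigl(\lambda-\tfrac{\lambda_j}{2}\bigr)\ve_j\ve_j^T,
\]
whose first diagonal entry equals $2(\lambda_1-\lambda_j)<0$ (using $\lambda_1<\lambda_2\leq\lambda_j$), hence $\ve_1$ witnesses a strictly negative eigenvalue. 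Thus all non-minimum critical points are strict saddles.

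For step (ii), I would invoke Lee et al.\ (2016). Their theorem requires that $g_\alpha$ is a diffeomorphism, which holds whenever $\alpha<1/L$ for $L$ a Lipschitz constant of $\nabla\ell$ on the relevant region. Since $\ell$ is a polynomial of degree four, $\nabla^2\ell$ is not globally bounded, so the main technical point is to restrict attention to a bounded invariant region. I would argue by coercivity: $\ell(\vq)\to\infty$ as $\norm{\vq}\to\infty$, so for any random initialization $\vw^0$ with $\ell(\vw^0)\leq C$, the sublevel set $\{\ell\leq C\}$ is compact; choosing $\alpha$ smaller than the reciprocal of the Hessian's operator norm on a slightly enlarged sublevel set ensures $\ell$ is monotonically decreasing along the iteration and the trajectory stays in this compact set. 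On this set $\nabla\ell$ is Lipschitz and $g_\alpha$ is a diffeomorphism, so the local stable manifold theorem applies at each of the (finitely many) strict saddles, producing a codimension-$\geq 1$ stable manifold there. The set of initializations converging to some strict saddle is contained in the countable union of preimages $g_\alpha^{-k}$ of these local stable manifolds, hence has Lebesgue measure zero.

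The main obstacle is the non-global-Lipschitzness alluded to above; handling it cleanly requires a uniform step-size argument on a sufficiently large but fixed bounded region determined by the distribution of $\vw^0$. Once this is in place, combined with the coercivity of $\ell$ guaranteeing that bounded orbits converge to a critical point, we conclude that almost surely $\vw^k$ converges to a point in $\{\pm\vq_1\}$, i.e., to the approximated signed distance function to $\gH$.
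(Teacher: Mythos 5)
Your proposal takes essentially the same route as the paper: classify the critical points via Theorem \ref{thm:critical}, check that each non-minimizing critical point has a Hessian with a strictly negative eigenvalue (the strict-saddle property), and then invoke the stable-manifold / strict-saddle theorem of Lee et al.\ (2016) to conclude that the set of initializations attracted to any such point has Lebesgue measure zero. The explicit Hessian computations you carry out at $\vq=\zero$ and at $\pm\vq_j$ for $j\geq 2$ are correct and match what the paper's proof of Theorem \ref{thm:critical} establishes.

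Where you go beyond the paper is in step (ii): the paper simply cites Theorem 4.1 of Lee et al.\ and does not address the fact that $\nabla\ell$ is a cubic polynomial and hence not globally Lipschitz, so the condition $\alpha<1/L$ in that theorem cannot be taken with a single global $L$. You correctly identify this as the main technical obstacle and sketch a standard fix: use coercivity of $\ell$ (it blows up at infinity) to obtain a compact sublevel set containing the trajectory, bound the Hessian operator norm there, and choose $\alpha$ accordingly so that $g_\alpha$ is a diffeomorphism and $\ell$ is monotone along the iterates. This is a genuine gap in the paper's two-line argument that your version fills in; the trade-off is only that the resulting step-size bound depends on the sublevel set (hence on the initialization), which is consistent with the informal reading of ``sufficiently small $\alpha$'' but should in principle be stated. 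The paper's appendix also offers a separate continuous-time Liapunov-function argument for gradient flow, which sidesteps the step-size issue entirely but proves a slightly different statement; your discrete-time sublevel-set argument is closer in spirit to what Theorem \ref{thm:converge} actually asserts.
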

Since $\ell(\vw)\too\infty$ as $\norm{\vw}\too \infty$ and $\ell(\vw)\geq 0$ everywhere, an immediate consequence of this theorem is that \eqref{e:gradient_descent} converges (up-to the constant step-size) with probability one to one of the two global minima that represent the signed distance function to $\gH$. 

To prove Theorem \ref{thm:converge} note that Theorem \ref{thm:critical} shows that the Hessian (\eqref{e:loss_linear_hess}) evaluated at all bad critical points (\ie, excluding the two that correspond to the signed distance function) have at-least one strictly negative eigenvalue. Such saddle points are called \emph{strict saddle points}.  Theorem 4.1 in \cite{lee2016gradient} implies that gradient descent will avoid all these strict saddle points. Since the loss is non-negative and blows-up at infinity the proof is concluded. \qed

\section{Implementation and evaluation details}
\label{s:implementation}
\paragraph{Architecture.}
For representing shapes we used level sets of MLP $f\left(\vx;\theta\right)$; $f:\Real^3\times \Real^m \too \Real$, with 8 layers, each contains 512 hidden units, and a single skip connection from the input to the middle layer as in \cite{park2019deepsdf}. The weights $\theta\in\Real^m$ are initialized using the geometric initialization from \cite{atzmon2020sal}. We set our loss parameters (see \eqref{e:loss}) to  $\lambda=0.1,\tau=1$.

\paragraph{Distribution $\gD$.} We defined the distribution $\gD$ for the expectation in \eqref{e:loss} as the average of a uniform distribution and a sum of Gaussians centered at $\gX$ with standard deviation equal to the distance to the $k$-th nearest neighbor (we used $k=50$). This choice of $\gD$ was used throughout all experiments. 

\paragraph{Level set extraction.}
We extract the zero (or any other) level set of a trained MLP $f(\vx;\theta)$ using the \emph{Marching Cubes} meshing algorithm \cite{lorensen1987marching} on a uniform sampled grids of size $\ell^3$, where $\ell\in\set{256,512}$.

\paragraph{Evaluation metrics.}
Our quantitative evaluation is based on the following collection of standard metrics of two point sets $\gX_1,\gX_2\subset\Real^3$: the Chamfer and Hausdorff distances, \begin{align} \label{e:CD}
\dist_{\text{C}}\left(\gX_{1},\gX_{2} \right) & = \frac{1}{2}\left(\dist_{\text{C}}^{\rightarrow}\left(\gX_{1},\gX_{2} \right) + \dist_{\text{C}}^{\rightarrow}\left(\gX_{2},\gX_{1} \right) \right) \\ \label{e:HD}
\dist_{\text{H}}\left(\gX_{1},\gX_{2} \right) & =  \max \left\{ \dist_{\text{H}}^{\rightarrow}\left(\gX_{1},\gX_{2} \right),\dist_{\text{H}}^{\rightarrow}\left(\gX_{2},\gX_{1} \right) \right\}
\end{align}
where 
\begin{align}\label{e:one_sided_CD}
\dist_{\text{C}}^{\rightarrow}\left(\gX_{1},\gX_{2} \right) & = \frac{1}{\abs{\gX_1}}\sum_{\vx_{1}\in\gX_{1}}\min_{\vx_2\in \gX_{2}}\norm{\vx_1-\vx_2} , \\ \label{e:one_sided_HD}
\dist_{\text{H}}^{\rightarrow}\left(\gX_{1},\gX_{2} \right) & =  \max_{\vx_{1}\in\gX_{1}}\min_{\vx_2\in \gX_{2}}\norm{\vx_1-\vx_2}
\end{align}
are the one-sided Chamfer and Hausdorff distances (resp.).

\section{Model evaluation}

\paragraph{Signed distance function approximation.}
We start our evaluation by testing the ability of our trained model $f$ to reproduce a signed distance function (SDF) to known manifold surfaces. We tested: a plane, a sphere, and the Bimba model. In this experiment we used no normals and took the sample point cloud $\gX$ to be of infinite size (\ie, draw fresh point samples every iteration). 

\begin{wraptable}[6]{r}{0.45\columnwidth}
    \small
    \vspace{-10pt}\hspace{-5pt}
    \begin{tabular}{lr} 
                    & Relative Error                 \\
        \midrule
        Plane    & ${0.003 \pm 0.04}$       \\
        Sphere    & ${0.004 \pm 0.08}$      \\
        Bimba   & ${0.008 \pm 0.11}$        \\
        \end{tabular} \vspace{-8pt}
        \caption{SDF approximation.}
       \label{tab:sdf_approx}
\end{wraptable}
For each surface we separately train an MLP $f(\vx;\theta)$ with sampling distribution $\gD$. Table \ref{tab:sdf_approx} logs the results, where we report mean $\pm$ std of the relative error measured at $100$k random points. The relative error is defined by  $\frac{\abs{f\left(\vx;\theta \right) - s\left(\vx\right)}}{{\abs{s\left(\vx\right)}}}$, where $s:\Real^{3} \too \Real$ is the ground truth signed distance function. 
Figure \ref{fig:3models} provides a visual validation of the quality of our predictions, where equispaced positive (red) and negative (blue) level sets of the trained $f$ are shown; the zero level sets are in white. \vspace{-10pt}
\begin{figure}[t]
 \includegraphics[width=1.0\columnwidth]{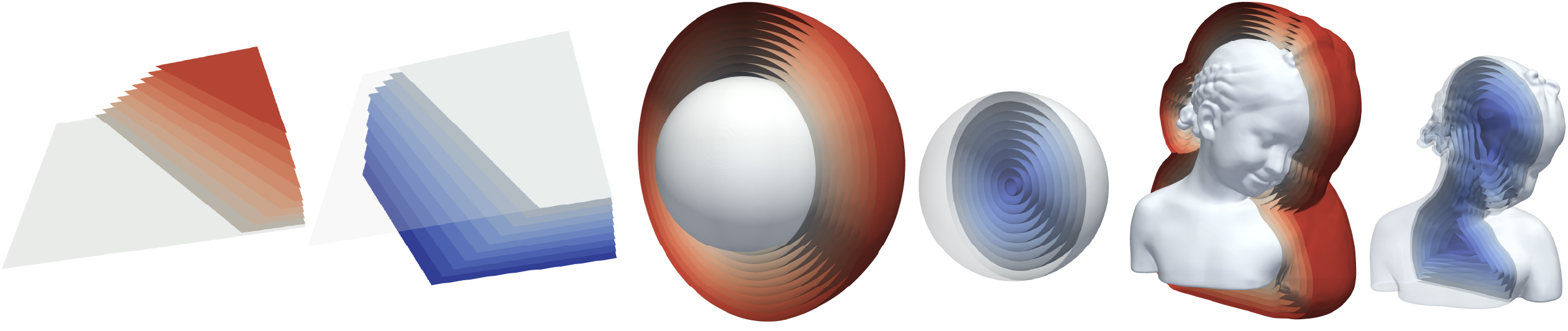}\vspace{0pt}
    \caption{Level sets of MLPs trained with our method.}
    \label{fig:3models}
    \vspace{-10pt}
\end{figure}

\paragraph{Fidelity and level of details.}
As mentioned above, previous works have suggested learning shapes as level sets of implicit neural networks (see \eqref{e:gM}) via regression or classification \cite{park2019deepsdf,mescheder2019occupancy,chen2019learning}. 

To test the faithfulness or fidelity of our learning method in comparison to regression we considered two raw scans (\ie, triangle soups) of a human, $\gS$, from the D-Faust \cite{dfaust:CVPR:2017} dataset. For each model we took a point sample $\gX\subset \gS$ of $250$k points, and a corresponding normal sample $\gN$ (from the triangles). We used this data $\gX,\gN$ to train an MLP with our method. 

For regression, we trained an MLP with the same architecture using an approximated SDF data pre-computed using a standard local SDF approximation. Namely, $s\left(\vx\right) = \vn_*^T\left(\vx - \vy_*\right)$, where $\vy_* = \argmin_{\vy\in\gS}\norm{\vy-\vx}_2$ and $\vn_*$ is the unit normal of the triangle containing $\vy_*$. We trained the MLP with an $L_1$ regression loss $\E_{\vx\sim \gD'}\abs{f\left(\vx;\theta\right) - s\left(\vx\right)}$, where $\gD'$ is a discrete distribution of $500$k points ($2$ points for every point in $\gX$) defined as in \cite{park2019deepsdf}.
Figure \ref{fig:lod} shows the zero level sets of the trained networks. Note that our method produced considerably more details than the regression approach. This improvement can be potentially attributed to two properties: First, our loss incorporates only points contained in the surface, while regression approximates the SDF nearby the actual surface. Second, we believe the implicit regularization property improves the fidelity of the learned level sets. 

\begin{figure}[t]
    \centering
    \begin{tabular}{c}
     \includegraphics[width=1.0\columnwidth]{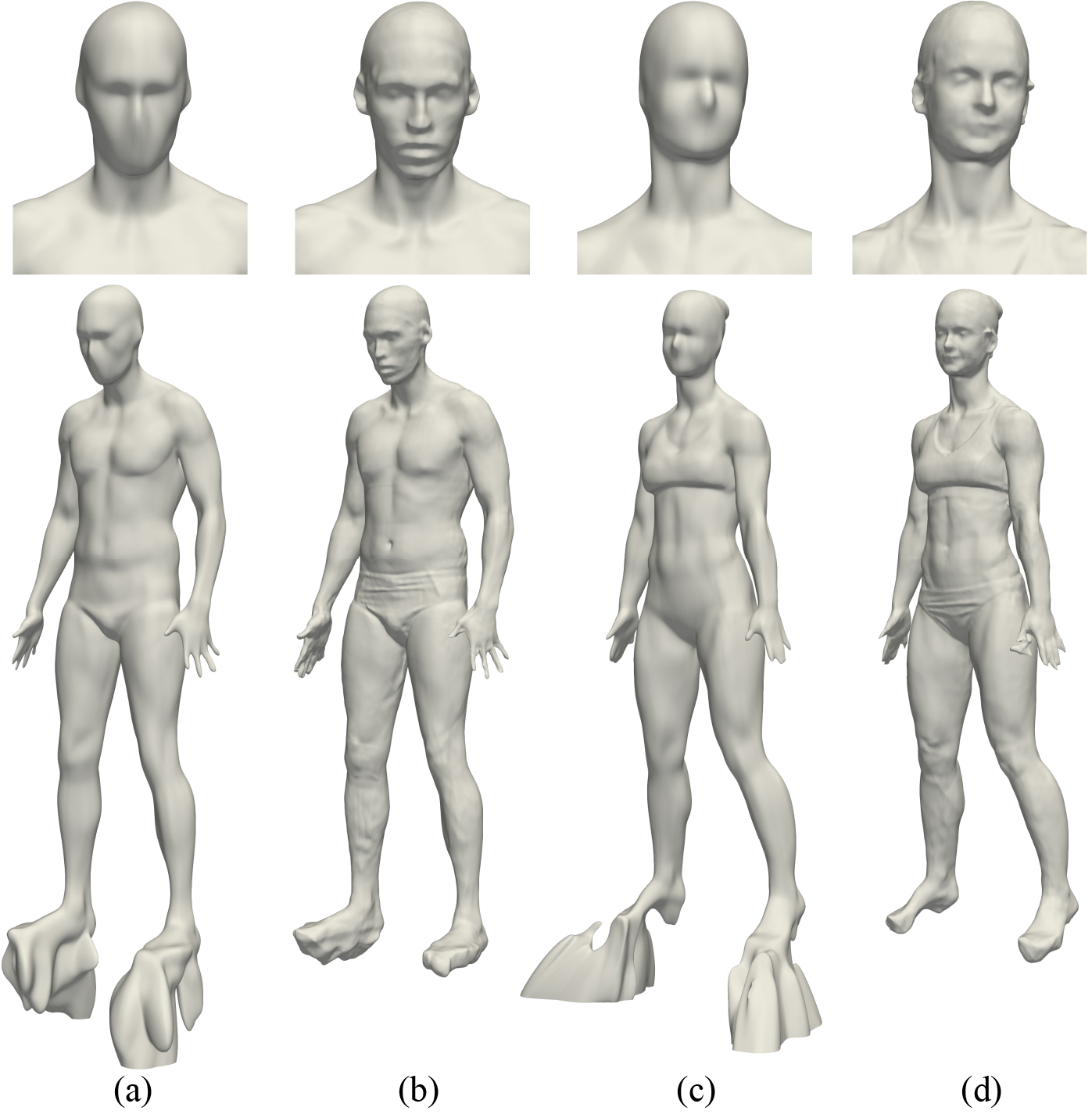}\vspace{-10pt}
\end{tabular}
    \caption{Level of details comparison. The zero level sets of an MLP trained with our method in (b) and (d); and using regression loss in (a) and (c), respectively.  }
    \label{fig:lod}
    \vspace{-5pt}
\end{figure}

\section{Experiments}

\begin{table}[t]
    \centering
    \small
    \setlength\tabcolsep{6pt} 
    \begin{tabular}{c}
        \begin{adjustbox}{max width=\textwidth}
            \aboverulesep=0ex
            \belowrulesep=0ex
            \begin{tabular}[t]{c|c|c|c | c | c|}
            \multicolumn{2}{c}{} & 
            \multicolumn{2}{|c|}{Ground Truth} & 
            \multicolumn{2}{|c|}{Scans} \\
                & Method                                  &
                $\dist_{\text{C}}$  & $\dist_{\text{H}}$ & $\dist_{\text{C}}^{\rightarrow}$ & $\dist_{\text{H}}^{\rightarrow}$ \\
                \midrule
                \multirow{ 2}{*}{Anchor} &
                DGP & 0.33 & 8.82 & \textbf{0.08} & 2.79 \\
                & Ours &  \textbf{0.22} & \textbf{4.71} & 0.12 & \textbf{1.32} \\
                 \midrule
                 \multirow{ 2}{*}{Daratech} &
                 DGP &  \textbf{0.2} & \textbf{3.14} &  \textbf{0.04 } & 1.89 \\
                & Ours & 0.25 & 4.01 & 0.08 & \textbf{1.59} \\
                \midrule
                 \multirow{ 2}{*}{Dc} &
                 DGP & 0.18 & 4.31 &  \textbf{0.04} & \textbf{2.53} \\
                & Ours & \textbf{0.17} &  \textbf{2.22} & 0.09 & 2.61 \\
                \midrule
                 \multirow{ 2}{*}{Gargoyle} &
                 DGP &  0.21 & 5.98 & \textbf{0.062} & 3.41 \\
                & Ours &  \textbf{0.16} & \textbf{3.52} & 0.064 & \textbf{0.81} \\
                \midrule
                 \multirow{ 2}{*}{Lord Quas} &
                DGP & 0.14 & 3.67 & \textbf{0.04} & 2.03 \\
                & Ours &  \textbf{0.12} & \textbf{1.17} & 0.07 & \textbf{0.98} \\
        \end{tabular} 
        \end{adjustbox}
      
    \end{tabular}
    \caption{Evaluation on the surface reconstruction benchmark versus DGP \cite{williams2019deep}. } \vspace{-5pt}
    \label{tab:surface_reconstruction}
\end{table}

\begin{figure}[t]
      \includegraphics[width=1.0\columnwidth]{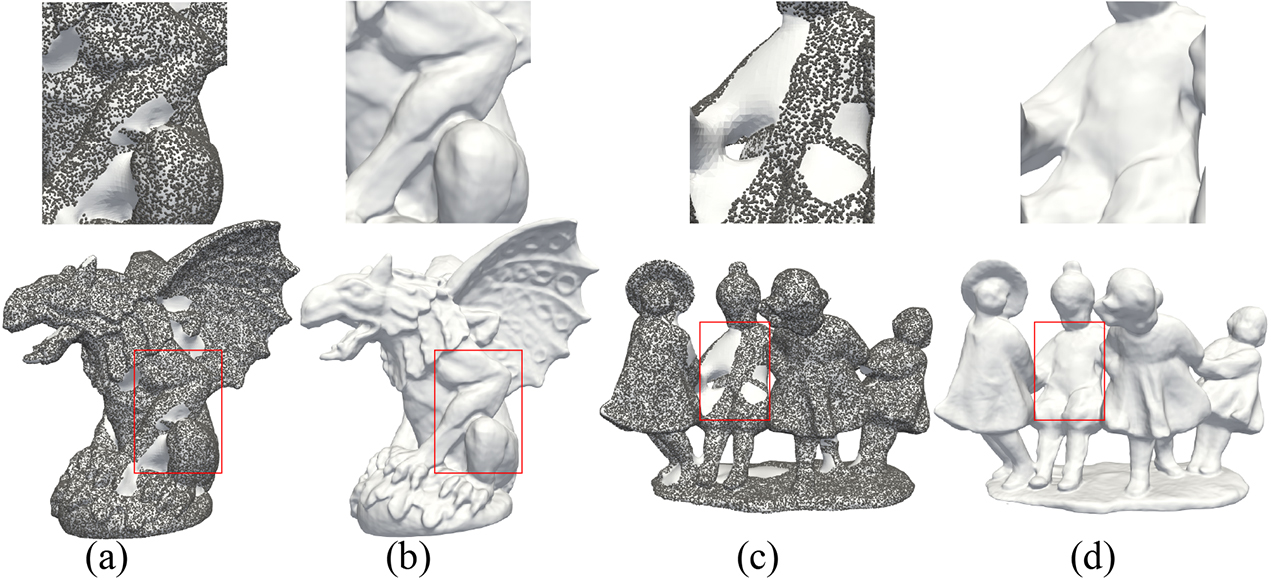}\vspace{-10pt}
    \caption{Reconstructions with our method in (b) and (d) versus \cite{williams2019deep} (DGP) in (a) and (c). Note the charts of DGP do not cover the entire surface area.}
    \label{fig:comparison_to_DGP}
    \vspace{0pt}
\end{figure}

\begin{figure}[h]
      \includegraphics[width=1.0\columnwidth]{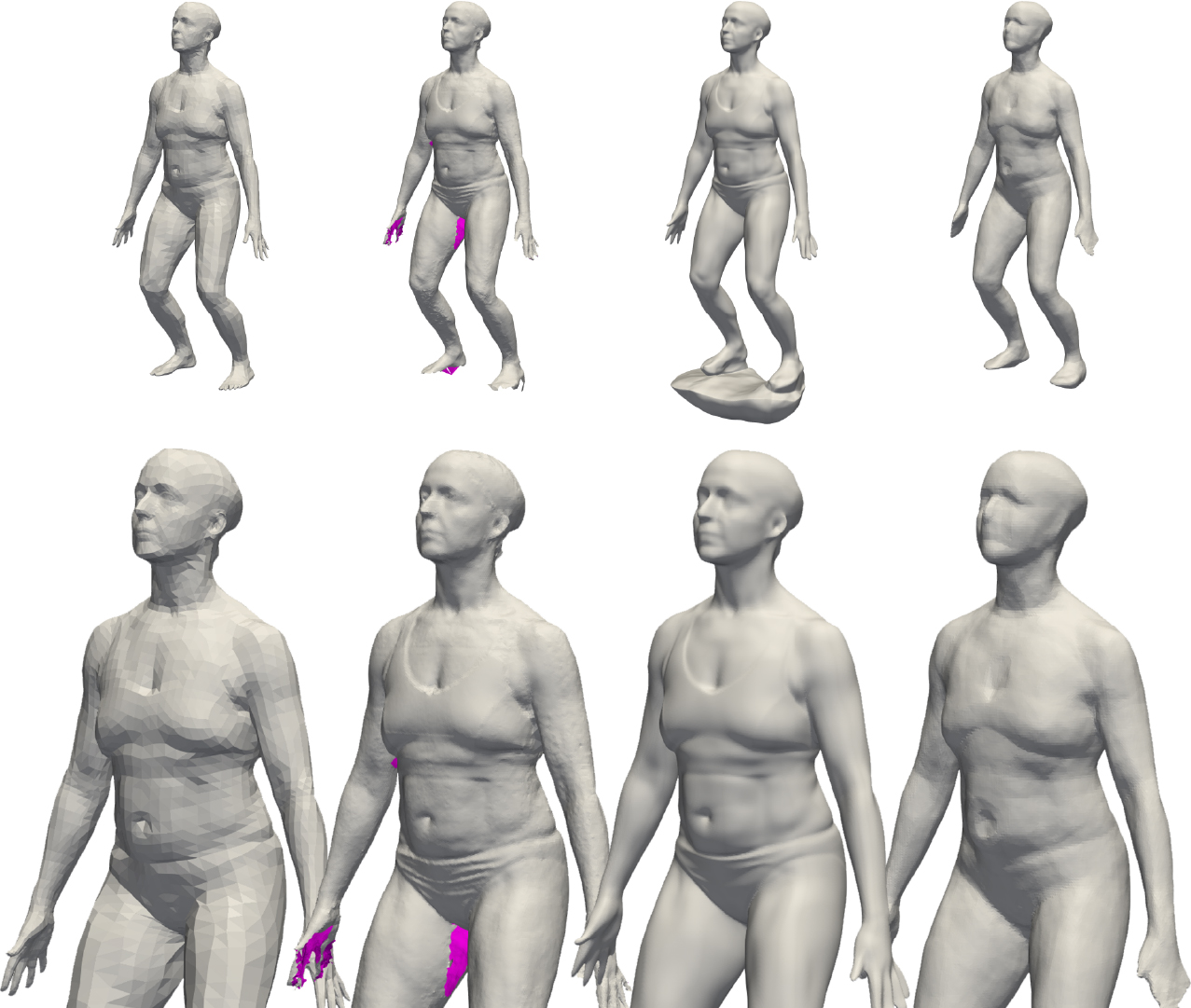}\vspace{-10pt}
    \caption{A train result on D-Faust. Left to right: registrations, scans, our results, SAL.}\vspace{-5pt}
    \label{fig:dfaust_short_train}
\end{figure}

\subsection{Surface reconstruction}\label{s:surface_recon}
We tested our method on the task of surface reconstruction. That is, given a \emph{single} input point cloud $\gX\subset \Real^3$ with or without a set of corresponding normal vectors $\gN\subset \Real^3$, the goal is to approximate the surface that $\gX$ was sampled from. The sample $\gX$ is usually acquired using a 3D scanner, potentially introducing variety of defects and artifacts. 
We evaluated our method on the surface reconstruction benchmark \cite{berger2013benchmark}, using data (input point clouds $\gX$, normal data $\gN$, and ground truth meshes for evaluation) from \cite{williams2019deep}. The benchmark consists of five shapes with challenging properties such as non trivial topology or details of various feature sizes including sharp features. We compared our performance to  the method from \cite{williams2019deep} (DGP), which is a recent deep learning chart-based surface reconstruction technique; \cite{williams2019deep} also provide plethora of comparisons to other surface reconstruction techniques and establish itself as state of the art method. Table \ref{tab:surface_reconstruction} logs the performance of both methods using the following metrics: we measure distance of reconstructions to ground truth meshes using the (two-sided) Chamfer distance $d_C$ and the (two-sided) Hausdorff distance $d_H$; and distance from input point clouds to reconstructions using the (one-sided) Chamfer distance $\dist_{\text{C}}^{\rightarrow}$ and the (one-sided) Hausdorff distance $\dist_{\text{H}}^{\rightarrow}$. Our method improves upon DGP in 4 out of 5 of the models in the dataset when tested on the ground truth meshes. DGP provides a better fit in average to the input data $\gX$ (our method performs better in Hausdorff); this might be explained by the tendency of DGP to leave some uncovered areas on the surface, see \eg, Figure \ref{fig:comparison_to_DGP} highlighting uncovered parts by DGP.

\begin{figure}[h]
      \includegraphics[width=1.0\columnwidth]{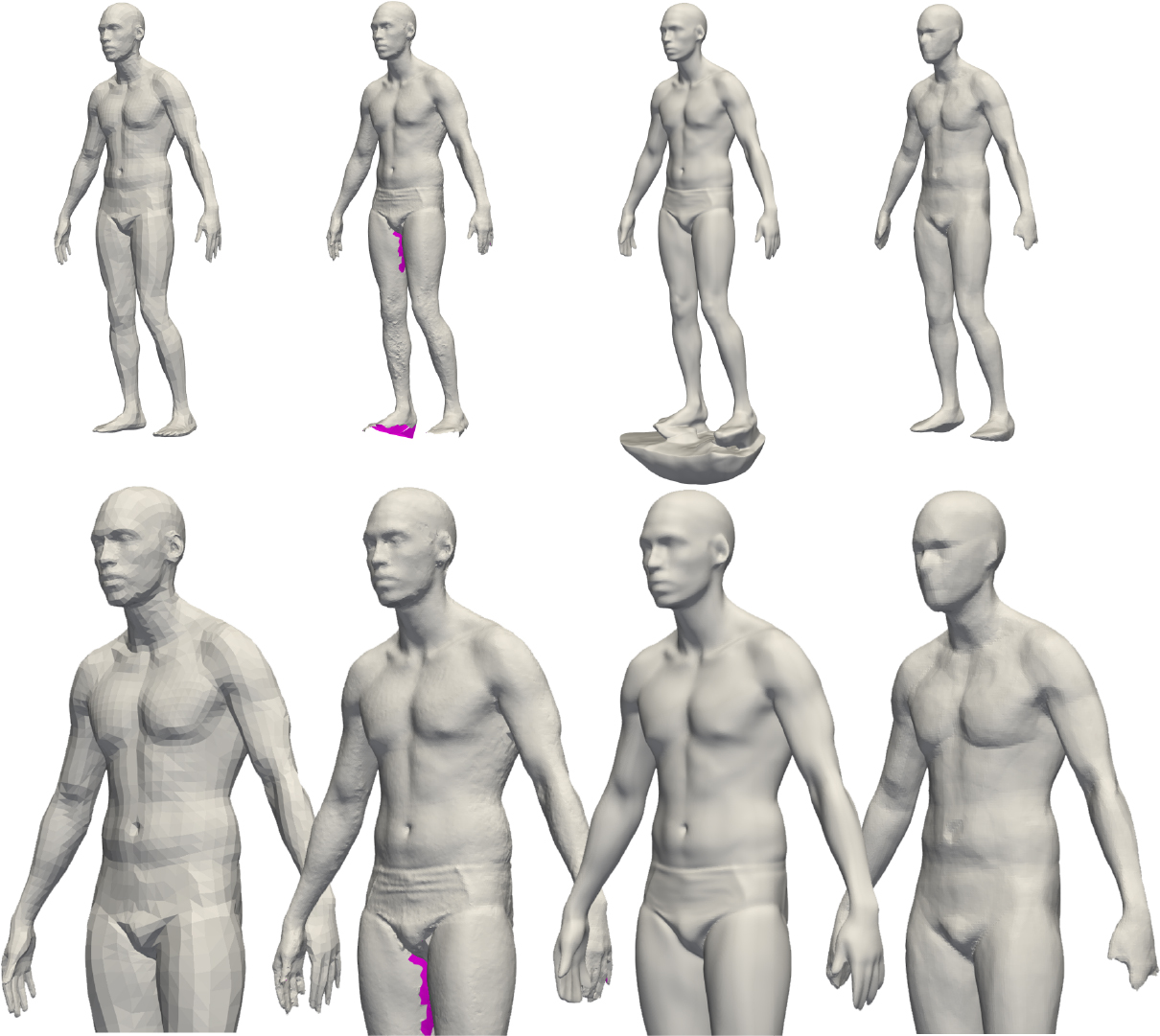}\vspace{-10pt}
    \caption{A test result on D-Faust. Left to right: registrations, scans, our results, SAL.}
    \label{fig:dfaust_short_test}
\end{figure}

\begin{figure}[h]
      \includegraphics[width=1.0\columnwidth]{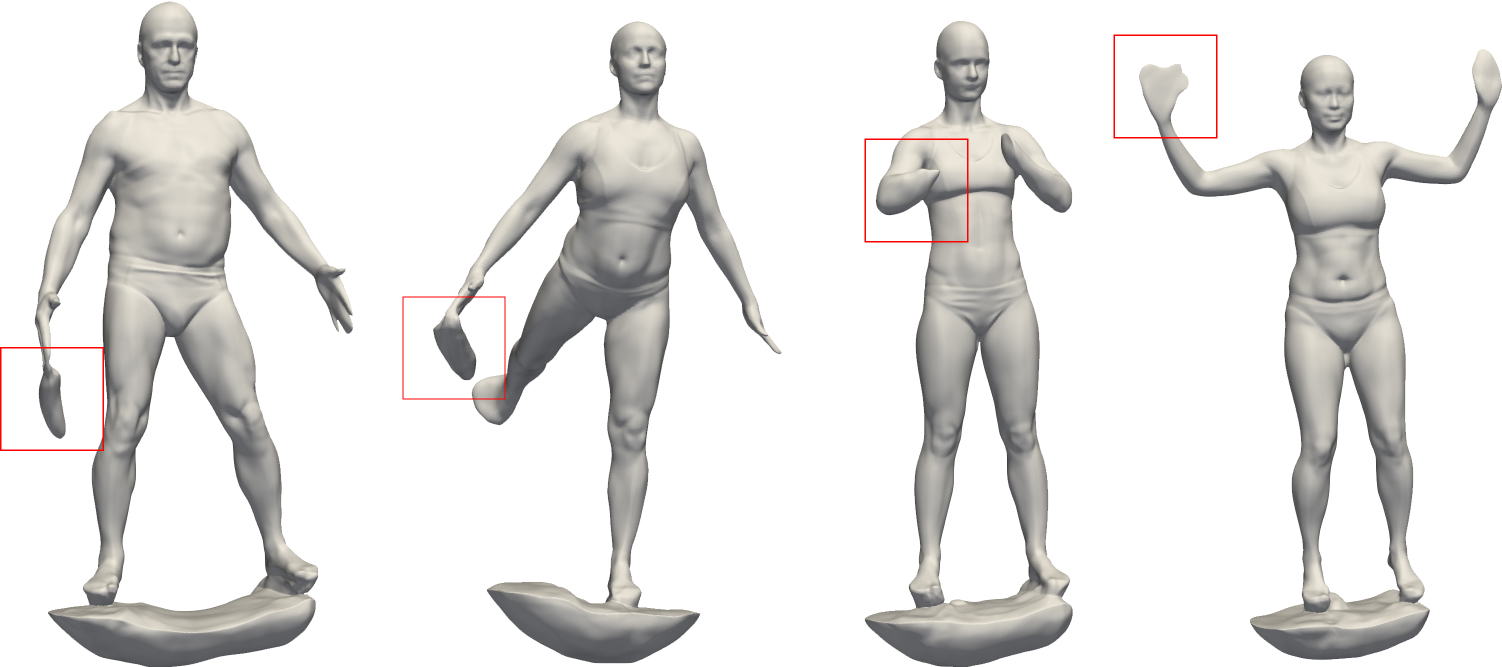}\vspace{-13pt}
    \caption{Failures of our method on D-Faust. }
    \label{fig:dfaust_failures}
\end{figure}

\subsection{Learning shape space}\label{ss:shape_spaces}
In this experiment we tested our method on the task of learning shape space from raw scans. To this end, we use the D-Faust \cite{dfaust:CVPR:2017} dataset, containing high resolution raw scans (triangle soups) of 10 humans in multiple poses. For training, we sampled each raw scan, $\gS_j$, $j\in J$, uniformly to extract point and normal data,  $\left\{(\gX_j,\gN_j\right)\}_{j\in J}$.
We tested our method in two different settings: (i) random 75\%-25\% train-test split; (ii) generalization to unseen humans - $8$ out of $10$ humans are used for training and the remaining $2$ for testing. In both cases we used the same splits as in \cite{atzmon2020sal}. The second column from the left in Figures \ref{fig:dfaust_short_train}, \ref{fig:dfaust_short_test} show examples of input \emph{scans} from the D-Faust datset. The left column in these figures shows the ground truth \emph{registrations}, $\gR_j$, $j\in J$, achieved using extra data (\eg, color and texture) and human body model fitting \cite{dfaust:CVPR:2017}. Note that we \emph{do not} use the registrations data in our training, we only use the raw scans. 

\paragraph{Multi-shape architecture.}
In order to extend the network architecture described in section \ref{s:implementation} for learning multiple shapes, we use the auto-decoder setup as in \cite{park2019deepsdf}. That is, an MLP $f(\vx;\theta,\vz_j)$, where $\vz_j\in\Real^{256}$ is a latent vector corresponding to each training example $j\in J$.  The latent vectors $\vz_j\in\Real^{256}$ are initialized to $\vzero\in\Real^{256}$. We optimize a loss of the form
$\frac{1}{|B|}\sum_{j\in B}\ell(\theta,\vz_j) + \alpha\norm{\vz_j}$, where $B\subset J$ is a batch, $\alpha=0.01$, $\ell$ defined in \eqref{e:loss}; $\tau,\lambda$ as above.


\begin{figure}[t]
\begin{tabular}{@{}c@{}c@{}c@{}}
    \includegraphics[width=0.33\columnwidth]{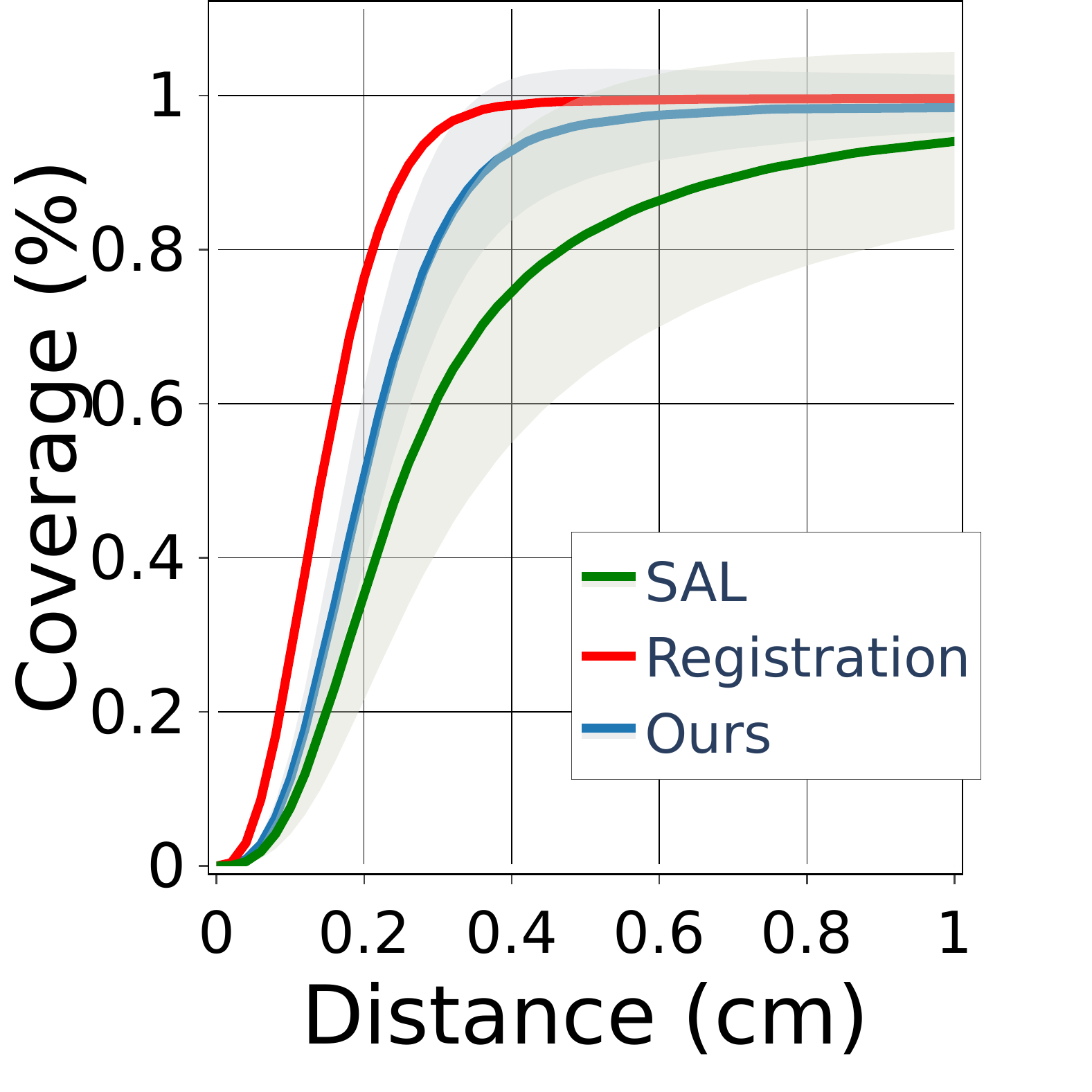} &
     \includegraphics[width=0.33\columnwidth]{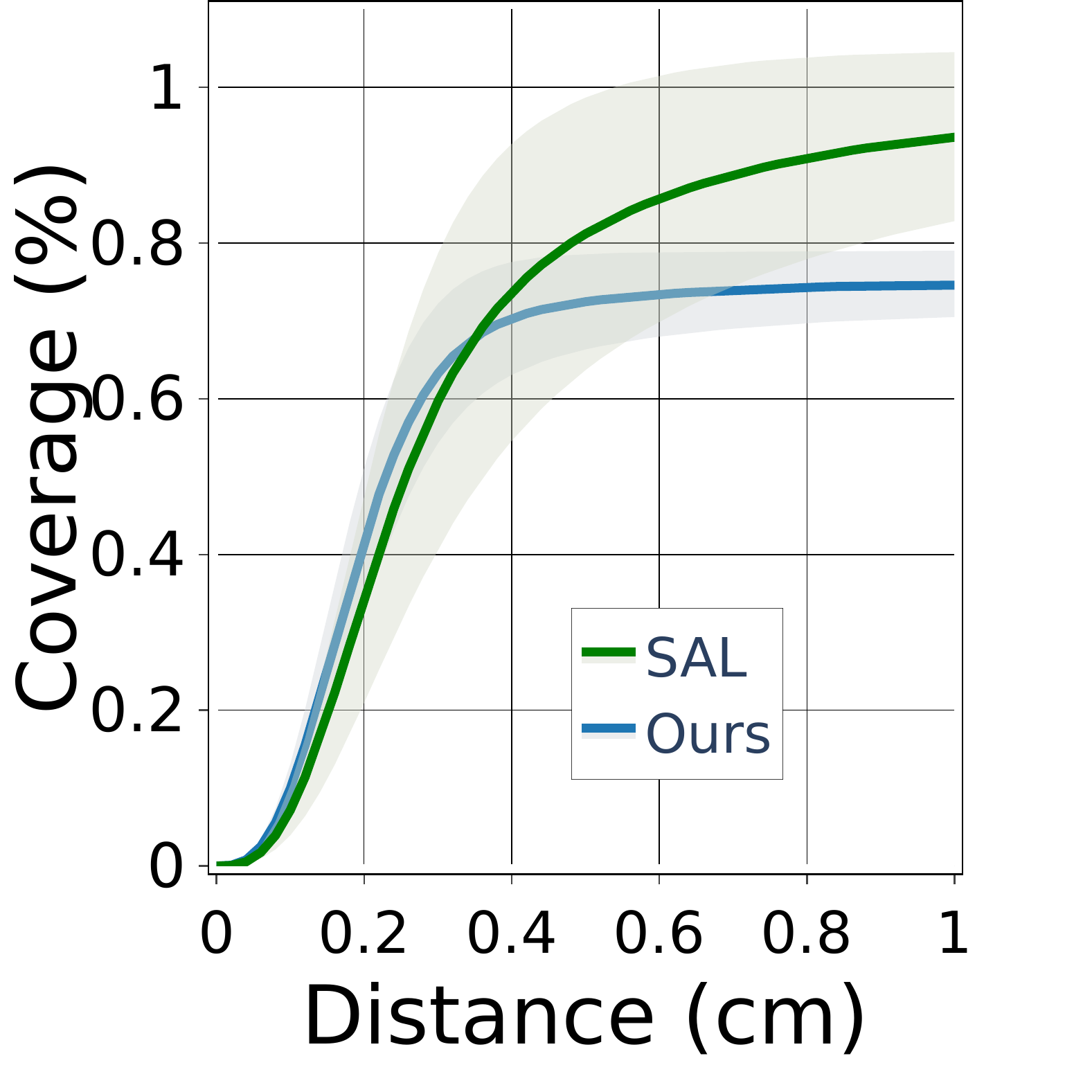} &
     \includegraphics[width=0.33\columnwidth]{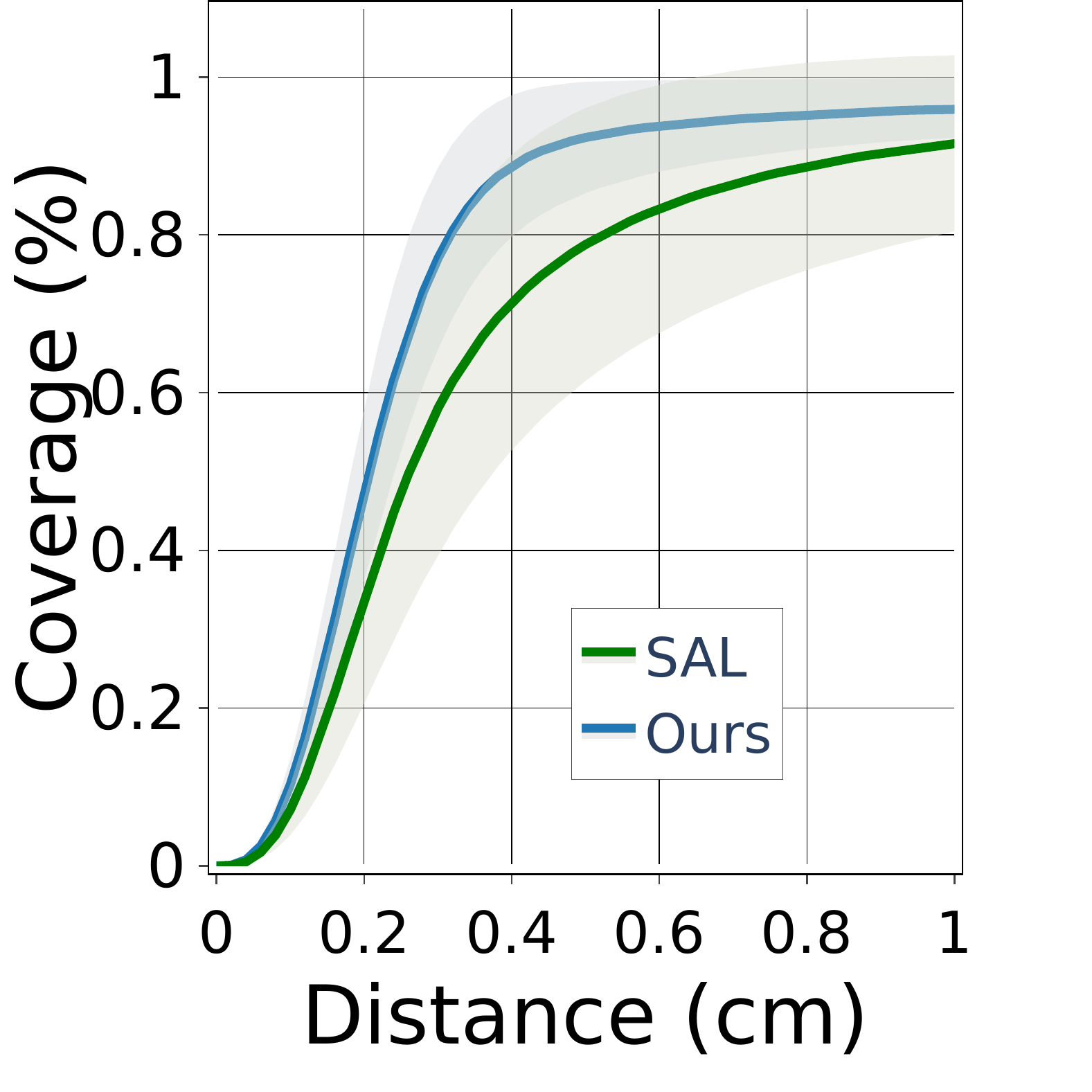} \\
    \includegraphics[width=0.33\columnwidth]{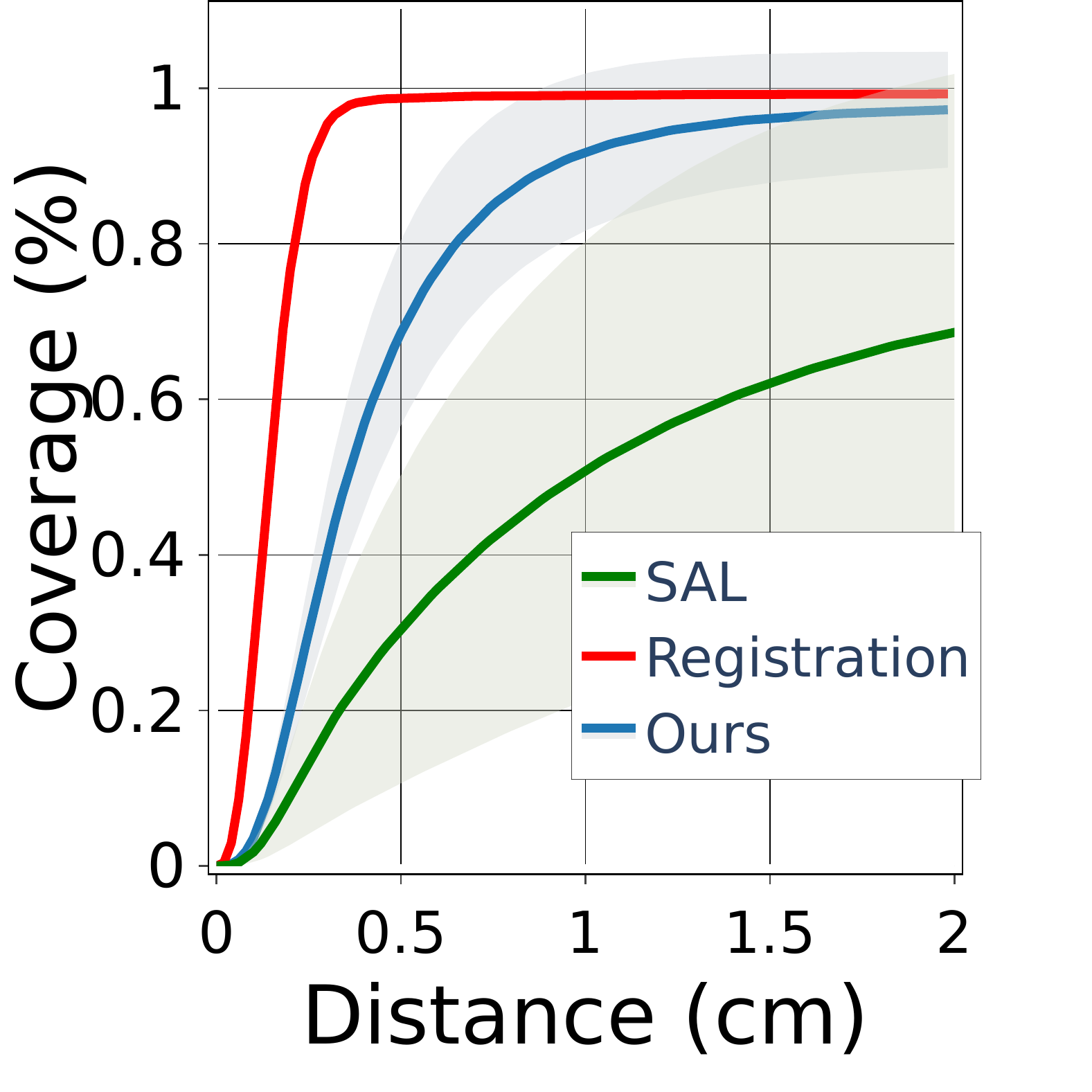} &
     \includegraphics[width=0.33\columnwidth]{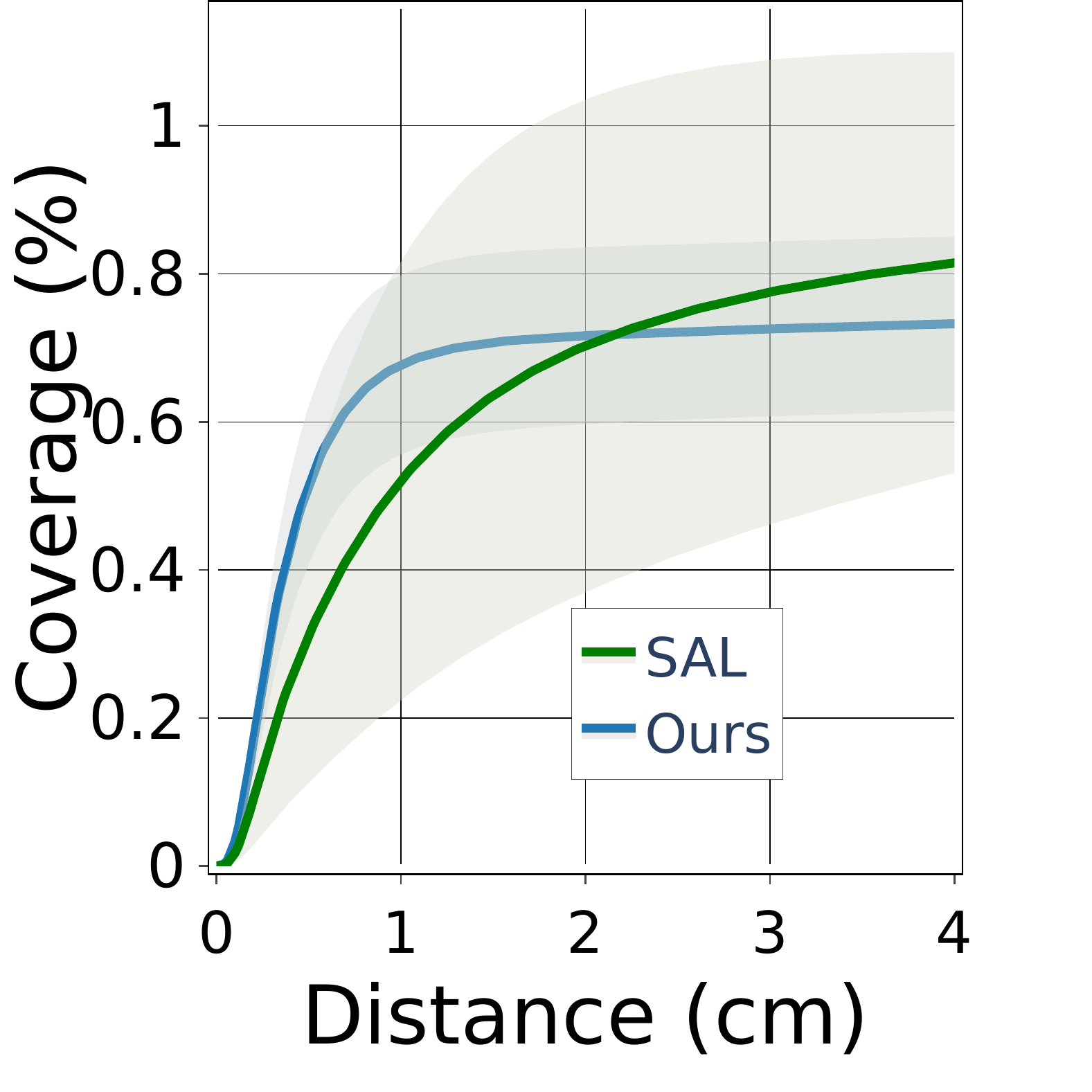} &
     \includegraphics[width=0.33\columnwidth]{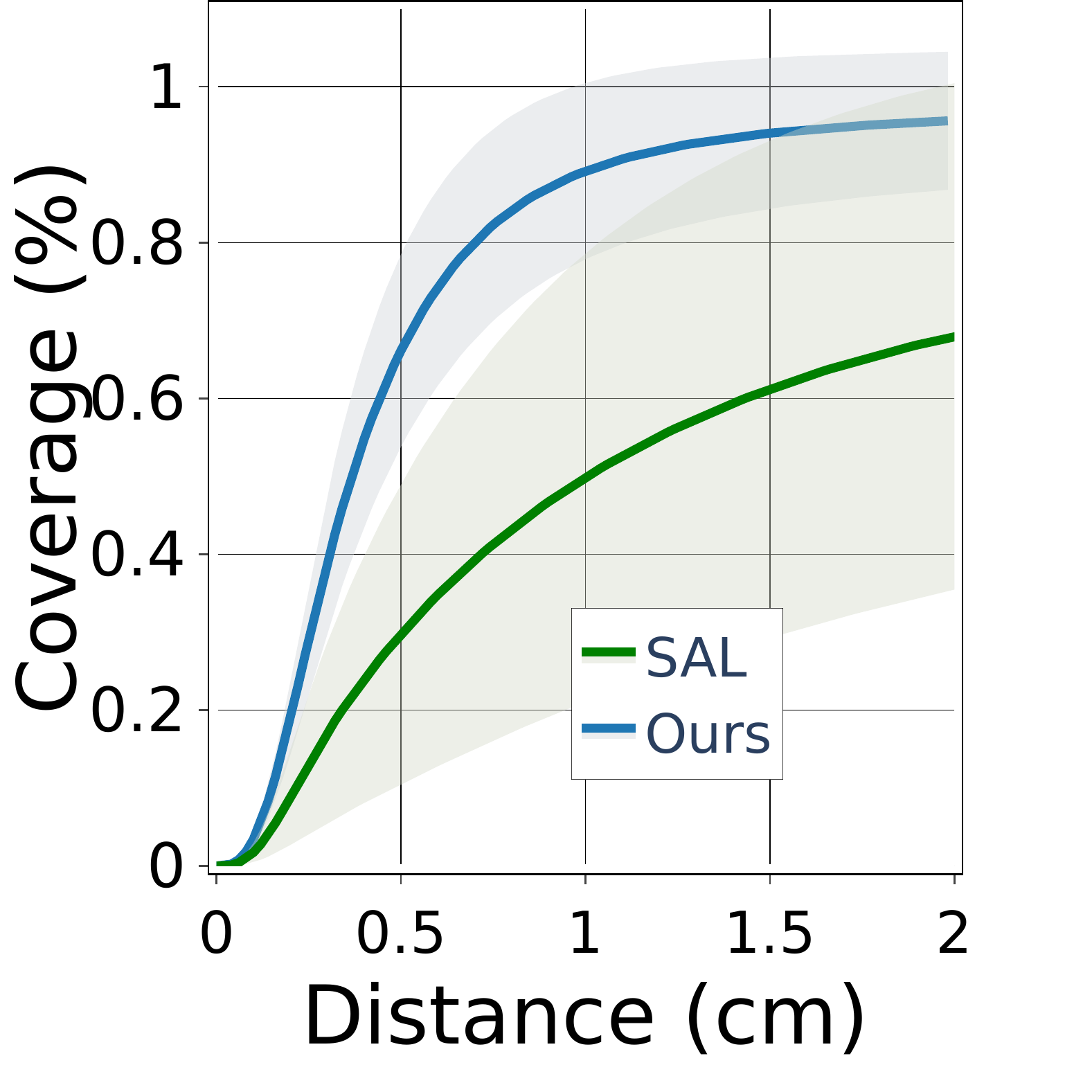} \\
     (a) & (b) & (c) 
\end{tabular}
\caption{Error versus coverage for D-Faust test shapes with random split (first row) and unseen humans split (second row):
\\(a) one-sided Chamfer distance from scan-to-reconstruction;
\\(b) one-sided Chamfer distance reconstruction-to-registration;\\
(c) one-sided Chamfer distance registration-to-reconstruction.\\
}
\label{fig:dfaust_res}
\vspace{-12pt}
\end{figure}

\begin{figure}[b!]
    \centering
     \includegraphics[width=\columnwidth]{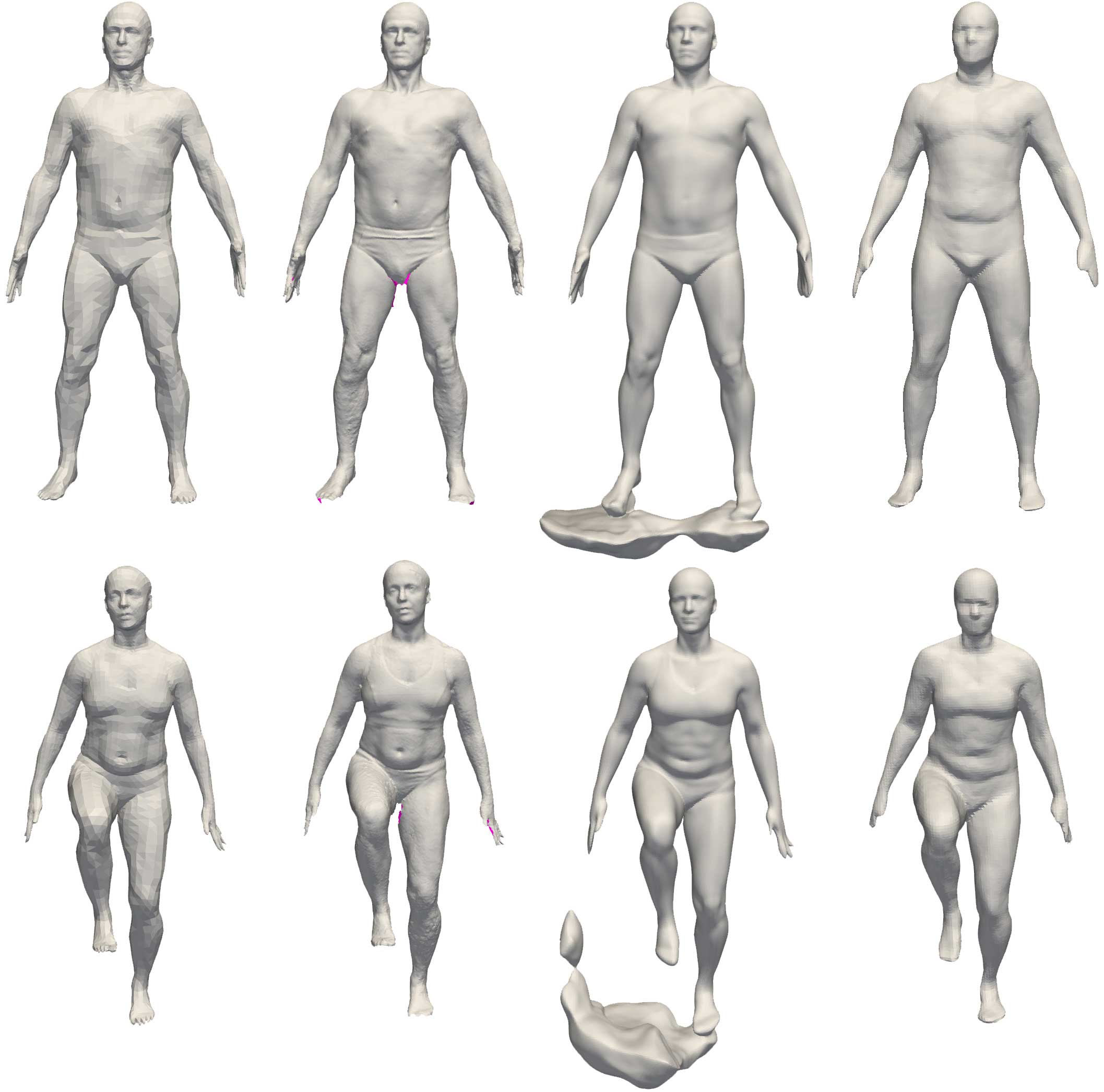}
    \vspace{-12pt}
    \caption{
    A test result on D-Faust with unseen humans split. Left to right: registrations, scans, our results, SAL.}
    \label{fig:humans}
\end{figure}

\begin{figure*}[t!]
    \centering
     \includegraphics[width=\textwidth]{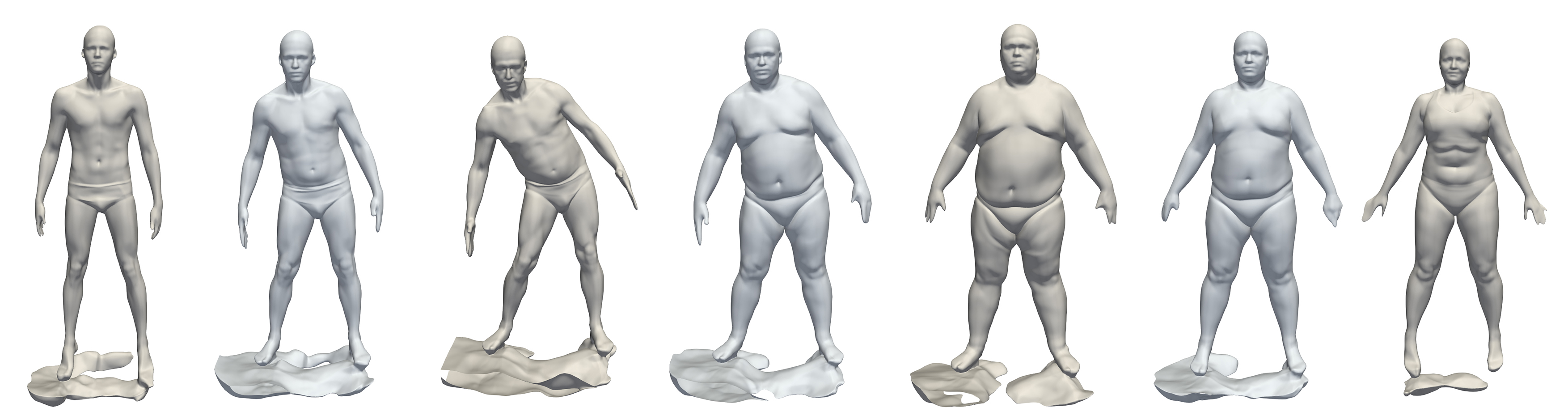}
    \caption{Averaged shapes: Zero level sets (in blue) using averages of latent vectors of train examples (in gray).}
    \label{fig:interpolation}
\end{figure*}

\paragraph{Evaluation.}
For evaluation, we used our trained model for predicting shapes on the held out test set. As our architecture is an auto-decoder, the prediction for each test shape is obtained by performing $800$ iterations of latent vector $\vz$ optimization of the loss $\ell$. We compared our results versus those obtained using SAL \cite{atzmon2020sal}, considered as state-of-the-art learning method on this dataset. 

Figures \ref{fig:dfaust_short_train} and \ref{fig:dfaust_short_test} show examples from the train and test random split, respectively. Results for the unseen humans experiment are shown in \ref{fig:humans}. More results can be found in the appendix. Magenta triangles are back-faces hence indicating holes in the scan. Note that our method produces high level reconstructions with more details than SAL. In the unseen human tests the method provides plausible approximation despite training on only 8 human shapes. We note that our method produces a sort of a common "base" to the models, probably due to parts of the floor in some of the train data. 

Figure \ref{fig:dfaust_res} quantifies coverage as a function of error: Given a point cloud $\gX$, we measure, for each distance value $\epsilon$ (X-axis),  
the fraction of points $\vx\in\gX$ that satisfy $\dist(\vx,\gY)<\epsilon$. In (a) $\gX \subset \gS_j$, and $\gY$ are the reconstructions of the registration, SAL and our method; note that our reconstructions are close in performance to the ground truth registrations. In (b), $\gX$ is a sampling of the reconstruction of SAL and our method, $\gY=\gR_j$. In (c), $\gX$ is a sampling of $\gR_j$ and $\gY$ is the reconstructions of SAL and our method. The lines represent mean over $j\in J$ and shades represent standard deviations. Note that we improve SAL except for larger errors in (b), a fact which can be attributed to the "base" reconstructed by our method. Some failure examples are shown in \ref{fig:dfaust_failures}, mainly caused by noisy normal data. In addition, note that for the unseen humans split, we get relatively higher reconstruction error rate than the random split. We attribute this to the fact that there are only $8$ humans in the dataset.  

\paragraph{Shape space exploration.}
For qualitative evaluation of the learned shape space, we provide reconstructions obtained by interpolating latent vectors. Figure \ref{fig:interpolation} shows humans shapes (in blue) corresponding to average interpolation of latent vectors $\vz_j$ of training examples (in gray). Notice that the averaged shapes nicely combine and blend body shape and pose.

\section{Conclusions}
We introduced a method for learning high fidelity implicit neural representations of shapes directly from raw data. The method builds upon a simple loss function. Although this loss function possesses an infinite number of signed distance functions as minima, optimizing it with gradient descent tends to choose a favorable one. We analyze the linear case proving convergence to the approximate signed distance function, avoiding bad critical points. Analyzing non-linear models is a very interesting future work, \eg, explaining the reproduction of lines and circles in \ref{fig:2d}. \\
The main limitation of the method seems to be sensitivity to noisy normals as discussed in section \ref{ss:shape_spaces}. We believe the loss can be further designed to be more robust to this kind of noise.\\
Practically, the method produces neural level sets with significant more details than previous work. An interesting research venue would be to incorporate this loss in other deep 3D geometry systems (\eg, differentiable rendering and generative models).


\section*{Acknowledgments}
The research was supported by the European Research Council (ERC Consolidator Grant, "LiftMatch" 771136), the Israel Science Foundation (Grant No. 1830/17) and by a research grant from the Carolito Stiftung (WAIC).
\bibliography{IGR-ArXiv}
\bibliographystyle{icml2020}

\appendix

\section{Additional Implementation Details}

\subsection{Network Architecture.}
We used Auto-Decoder network architecture proposed in \cite{park2019deepsdf}, as described in sections \ref{s:implementation} and \ref{ss:shape_spaces}:
\begin{figure}[h]
    \centering
    \begin{tabular}{c}
     \includegraphics[width=0.95\columnwidth]{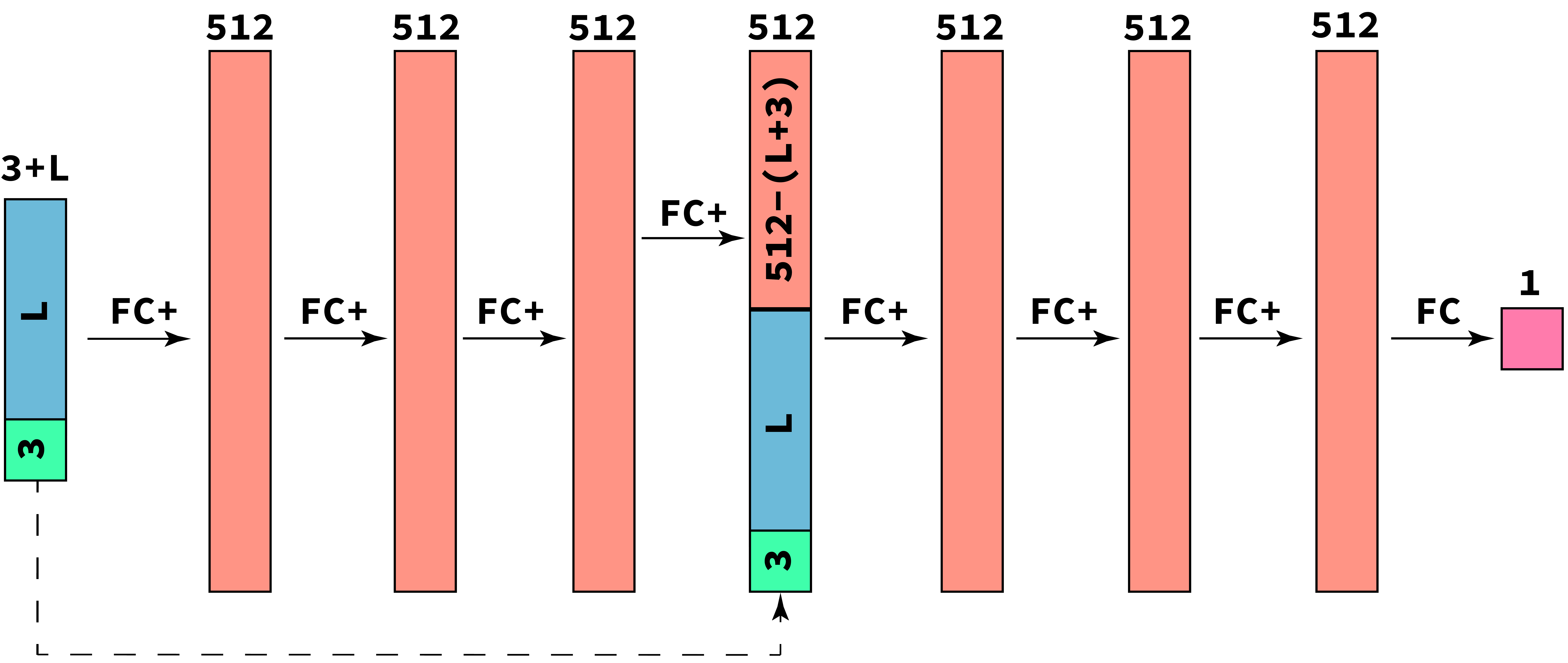}
\end{tabular}
    \label{fig:net}
    \vspace{-10pt}
\end{figure}

where $\mathbf{FC} $ is a fully connected linear layer and $\mathbf{FC+} $ is $\mathbf{FC}$ followed by $ \mathbf{softplus} $ activation; a smooth approximation of $\mathbf{ReLU}$: $ x\mapsto\frac{1}{\beta}\ln(1+e^{\beta x}) $. We used $\beta=100$. The dashed line connecting the input to the $4$th layer indicates a skip connection. $\mathbf{L}$ is the latent vector's size. For shape reconstruction application we take $\mathbf{L}=0$; for the shape space experiment we used $\mathbf{L}=256$.

\subsection{Training Details.}
\paragraph{Shape Reconstruction.}
 Training was done on a single Nvidia V-100 GPU with \textsc{pytorch} deep learning framework \cite{paszke2017automatic}. We used \textsc{Adam} optimizer \cite{kingma2014adam} for $100$k iterations with constant learning rate of $ 0.0001 $. In each iteration we sampled uniformly at random $128^2$ points from of the input point cloud.

\paragraph{Shape Space Learning.}
Training was done on 4 Nvidia V-100 GPUs, with \textsc{pytorch} deep learning framework \cite{paszke2017automatic}. We used \textsc{Adam} optimizer \cite{kingma2014adam} for $1$k epochs with initial learning rate of $ 0.0005 $ scheduled to decrease by a factor of $2$ every $500$ epochs. We divided the training set into mini-batches: a batch contains $32$ different shapes, where each shape is freshly sampled uniformly at random to produce $128^2$ points.

\section{Additional Results}

\subsection{Shape Space Learning}
As mentioned in \ref{ss:shape_spaces} we present additional results from the shape space learning experiment in Figure \ref{fig:supp_traintest}. We provide reconstruction results of both training and test (\ie, unseen point clouds) sets, with the random train-test split. These results are discussed in Section \ref{ss:shape_spaces}.

\section{Theory}

\subsection{Plane Reproduction using Liapunov Function}
\paragraph{}In this section we suggest an alternative, self-contained proof for the plane reproduction property of our model in the non-noisy data case, \ie,  $\gX=\set{\vx_i}_{i\in I}$ span some $d-1$ dimension hyperplane $ \gH\subset\mathbb{R}^d $ that contains the origin. 

\paragraph{}We present a simple argument that, with random initialization, the gradient flow in \eqref{e:ode_w(t)_changed} converges, with probability one, to one of the two global minima corresponding to the signed distance function to $\gH$ characterized in Theorem \ref{thm:critical}. We work in the transformed coordinate space and consider the gradient flow
\begin{equation}\label{e:ode_w(t)_changed}
\frac{d\vq}{dt}=-\nabla_\vq \ell(\vq),    
\end{equation}
with $\ell(\vq)$ as in \eqref{e:loss_linear_changed}.

\setcounter{theorem}{2}
\begin{theorem}\label{thm:linear_case}
When initializing the gradient flow in \eqref{e:ode_w(t)_changed} randomly, then with probability one the solution convergences $$\vq(t)\xrightarrow{t\too \infty}\vq^*,$$ where $\vq^*$ is one of the global minima of the loss in \eqref{e:loss_linear_changed}, \ie, $\pm\vq$. Therefore, the limit model, $f(\vx;\vq^*)$, approximates the signed distance function to $\ve_1^\perp$ (\ie, $\gH$ in the transformed coordinates). 
\end{theorem}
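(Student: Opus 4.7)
The plan is to exploit the diagonal structure of the flow in transformed coordinates, combining three ingredients: monotone decrease of $\ell$ as a Liapunov function, an exponentially contracting \emph{transverse ratio} that aligns the flow with the $\ve_1$-axis, and a local repulsion argument at the origin. In the non-noisy case $\lambda_1=0$, and componentwise the flow \eqref{e:ode_w(t)_changed} reads
\begin{equation*}
\dot q_j=-2\bigl(\lambda_j+2\lambda(u-1)\bigr)q_j,\qquad u(t):=\norm{\vq(t)}^2.
\end{equation*}
Each such equation is linear in $q_j$, so the sign $\sign(q_j(t))=\sign(q_j(0))$ is preserved. Moreover $\ell(\vq(t))$ is non-increasing along the flow and coercive, so the trajectory stays in a compact sublevel set.

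The key Liapunov quantity I would introduce is the transverse ratio $r_j(t):=q_j(t)^2/q_1(t)^2$ for $j\geq 2$, well-defined for all $t\geq 0$ whenever $q_1(0)\neq 0$ -- an event of probability one under any absolutely continuous initialization. A direct differentiation using the two component ODEs gives the autonomous linear equation $\dot r_j=-4\lambda_j r_j$, hence $r_j(t)=r_j(0)e^{-4\lambda_j t}\too 0$ exponentially. Thus $\vq(t)$ aligns exponentially with the $\ve_1$-axis, and combined with boundedness (and with standard single-point convergence for bounded gradient flows of real-analytic functions) the trajectory converges to a single critical point among those classified in Theorem~\ref{thm:critical}.

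It then remains to rule out every critical point except $\pm\ve_1$. The points $\pm\vq_j$ ($j\geq 2$) have $q_1=0$ while $q_j\neq 0$, so convergence to either of them would force $r_j(t)\too\infty$ -- contradicting $r_j\too 0$. The origin $\vzero$ is excluded by a direct repulsion argument: if $\vq(t)\too\vzero$ then $u(t)\too 0$, so eventually $u(t)<1/2$, and the ODE for $|q_1|$ yields $\frac{d}{dt}\log|q_1|=-4\lambda(u-1)\geq 2\lambda>0$, forcing $|q_1(t)|\too\infty$ -- contradicting $|q_1(t)|\leq\norm{\vq(t)}\too 0$. Sign preservation of $q_1$ then pins the limit to $\sign(q_1(0))\ve_1$, which is exactly the signed distance function to $\gH$ in the transformed frame.

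The main obstacle I expect is the invocation of single-point convergence: moving from the $\omega$-limit set being contained in the critical set (Lasalle's principle) to its being a single critical point. For general gradient flows this is not automatic, but here the loss $\ell$ is a polynomial and hence real-analytic, so the \L{}ojasiewicz gradient inequality applies and delivers the desired convergence to a single limit. Once that is in hand the exclusion arguments above are straightforward -- each bad critical point is ruled out by either the ratio identity $r_j(t)=r_j(0)e^{-4\lambda_j t}$ or the explicit repulsion from the origin -- and the convergence $\vq(t)\too\sign(q_1(0))\ve_1$ follows.
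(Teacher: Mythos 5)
Your proof is correct and takes a genuinely different route from the paper's. The paper constructs a single explicit global Liapunov function $h(\vq)=\norm{\vq-\vv}^2/(1+\norm{\vq}^2)$ on the half-space $\Omega=\set{\vq : \ve_1^T\vq>0}$, verifies the energy/decreasing/bounded conditions, and appeals to a standard stability theorem. You instead work componentwise with the diagonalized flow: the transverse ratios $r_j=q_j^2/q_1^2$ satisfy the autonomous equation $\dot r_j=-4\lambda_j r_j$ (more generally $\dot r_j=-4(\lambda_j-\lambda_1)r_j$, so the same mechanism works when $\lambda_1>0$ too), which decays exponentially; coupled with boundedness of the sublevel set, the repulsion argument near the origin, and sign preservation of $q_1$, the limit is pinned to $\sign(q_1(0))\ve_1$. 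Your version is more mechanistic -- it exposes the exponential rate of alignment with the $\ve_1$-axis and the sign-selection falls out immediately from the linearity of $\dot q_j$ in $q_j$ -- whereas the paper's $h$ is compact but looks pulled out of a hat. Two small remarks. First, your reliance on the \L{}ojasiewicz inequality for single-point convergence is heavier than necessary and clashes slightly with the paper's stated goal of a ``simple self-contained convergence proof that does not rely on previous work'': once $q_j\to 0$ for $j\geq 2$, the scalar $u=\norm{\vq}^2$ obeys $\dot u=-4\vq^TD\vq-8\lambda(u-1)u$, whose driving term is asymptotically $-8\lambda(u-1)u$, so $u\to 1$ (given the repulsion from $u=0$) and hence $q_1^2\to 1$ elementarily; alternatively, since the bad critical points have been excluded by the ratio and repulsion arguments, the connected $\omega$-limit set already collapses to one of $\pm\ve_1$. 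Second, it is worth stating explicitly that $r_j$ stays well-defined for all $t\geq 0$ because the sign-preservation argument guarantees $q_1(t)\neq 0$ for all $t$, not only at $t=0$.
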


We prove Theorem \ref{thm:linear_case} using a certain \emph{Liapunov function} (explained shortly).
By random initialization we mean $\vq^0$ is drawn from some continuous probability distribution in $\Real^d$ (\ie, with a density function). 
Note that with probability one $\vq^0$ is not orthogonal to $\ve_1$. Let $\vv$ be one of $\pm\vq=\pm{\scriptstyle\sqrt{1-\frac{\lambda_1}{2\lambda}}}\ve_1$ from Theorem \ref{thm:critical} so that $\vv^T\vq^0>0$.

\paragraph{Liapunov function.} To show that $\vq(t)$ converges to $\vv$ we will introduce a \emph{Liapunov function}; the existence of such a function implies the desired convergence using standard stability results from the theory of dynamical systems \cite{wiggins2003introduction,teschl2012ordinary}.  Consider the domain $\Omega=\set{\vq\in \Real^d \vert \ve_1^T \vq>0}$. $h:\Omega\too\Real$ is a \emph{Liapunov function} if it is $C^1$ and satisfies the following conditions:
\begin{enumerate}
    \item \emph{Energy:} $h(\vv)=0$ and $h(\vq)>0$ for all $\vq\in\Omega\setminus\set{\vv}$.
    \item \emph{Decreasing:} $\nabla h(\vq) \cdot \frac{d\vq}{dt}(\vq)<0$ for all $\vw\in\Omega\setminus\set{\vv}$. 
    \item \emph{Bounded:} The level-sets $\set{\vq \vert h(\vq)=c}$ are bounded. 
\end{enumerate}

\begin{wrapfigure}[11]{r}{0.3\columnwidth}
     \centering
     \includegraphics[width=0.3\columnwidth]{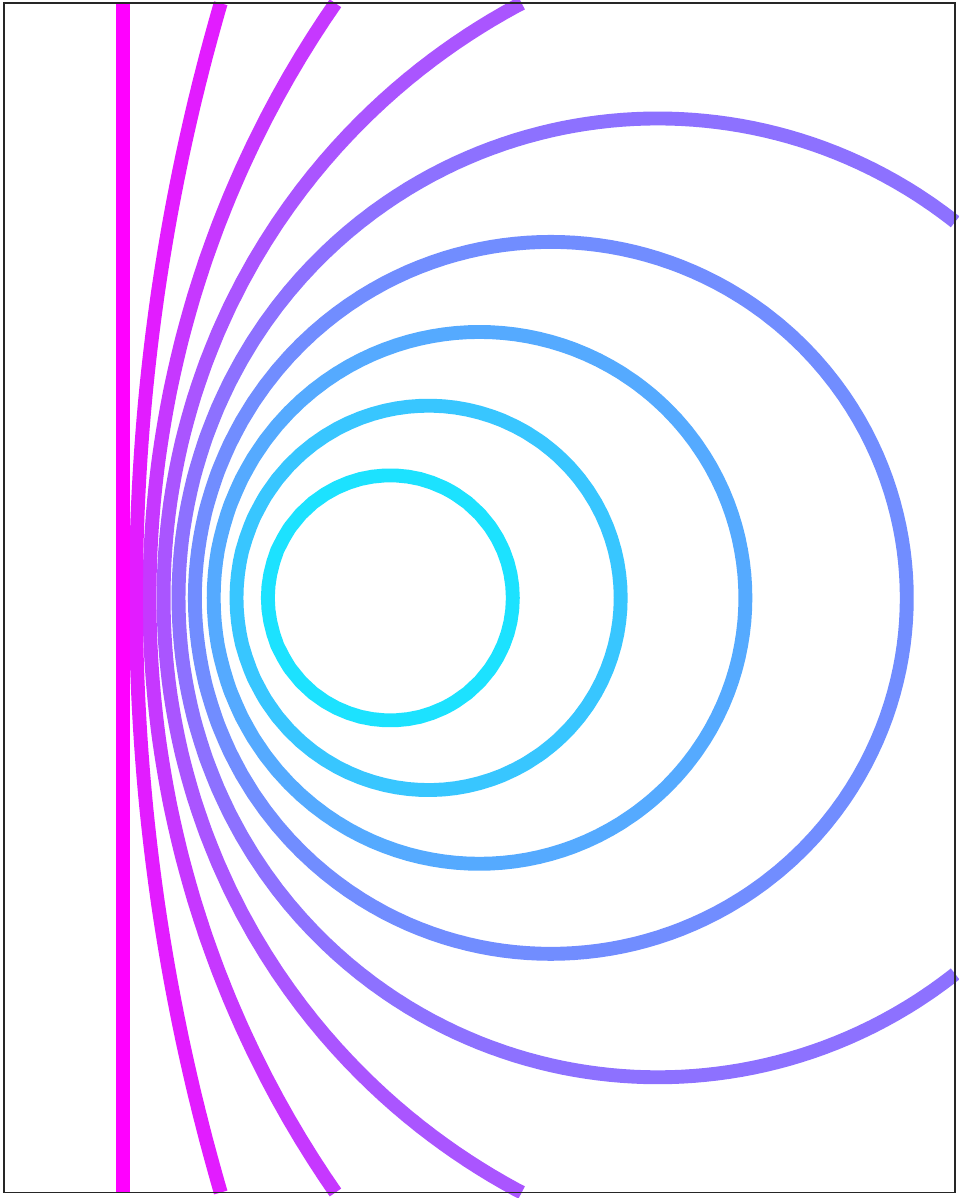}\vspace{-10pt}\caption{Level-sets of $h$.}\label{fig:liapunov}
\end{wrapfigure}
Intuitively, a Liapunov function can be imagined as a sort of an energy function (\ie, non-negative) that vanishes only at $\vv$ and that the flow defined by \eqref{e:ode_w(t)_changed} strictly decreases its value at every point, except at the fixed point $\vv$. These conditions imply that if a flow (\ie, integral curve) starting at $\vq_0\in \Omega$ stays bounded it has to converge to $\vv$. See for example Theorem 6.14 in \cite{teschl2012ordinary}. Now, consider 
\begin{equation}\label{e:liapunov}
h(\vq) = \frac{\norm{\vq-\vv}^2}{1+\norm{\vq}^2}.   \end{equation}
We will prove that $h$ is Liapunov for our problem. First it clearly satisfies the \emph{energy} condition. The \emph{bounded} condition can be seen by noting that $h(\vq)\in[0,1)$ for all $\vq\in\Omega$ and that in the quadratic equation $h(\vq)=c$ the quadratic term has the form $(1-c)\norm{\vw}^2$ and since $(1-c)>0$ the level-sets of $h$ are all finite-radius circles, see Figure \ref{fig:liapunov}. 

To prove the \emph{decreasing} property a direct computation shows that for $\vq\in \Omega$
 { \begin{equation*}
  \nabla h\cdot \frac{d\vq}{dt} = \frac{-8\vv^T \vq}{(1+\norm{\vq}^2)^2}\parr{\vq^T D \vq + \lambda\parr{\norm{\vq}^2-1}^2} \leq 0
\end{equation*}}
where in the last inequality we used the fact that $\vv^T\vq = q_1>0$, and $D$ is a positive semi-definite matrix, \ie, $\lambda_i \geq 0$, $i\in[d]$. Furthermore, if the r.h.s.~equals zero then $\vq^T D \vq=0$ and $\norm{\vq}=1$; this implies that $\vq=\vv$. Therefore for all $\vq\in\Omega\setminus \set{\vq}$ we have $\nabla h\cdot \frac{d\vq}{dt}<0$. $\qed$

\paragraph{Relation to Theorem \ref{thm:converge}.}
Although this seems as a special case of Theorem \ref{thm:converge}, note that it works for the continuous gradient flow. This is in contrast to the proof of Theorem \ref{thm:converge} that uses the result of \cite{lee2016gradient} building upon the discrete nature of gradient descent iterations. Furthermore, we believe a simple self-contained convergence proof that does not rely on previous work could be of merit. 

\begin{figure*}[t!]
    \centering
      \includegraphics[width=0.9 \textwidth]{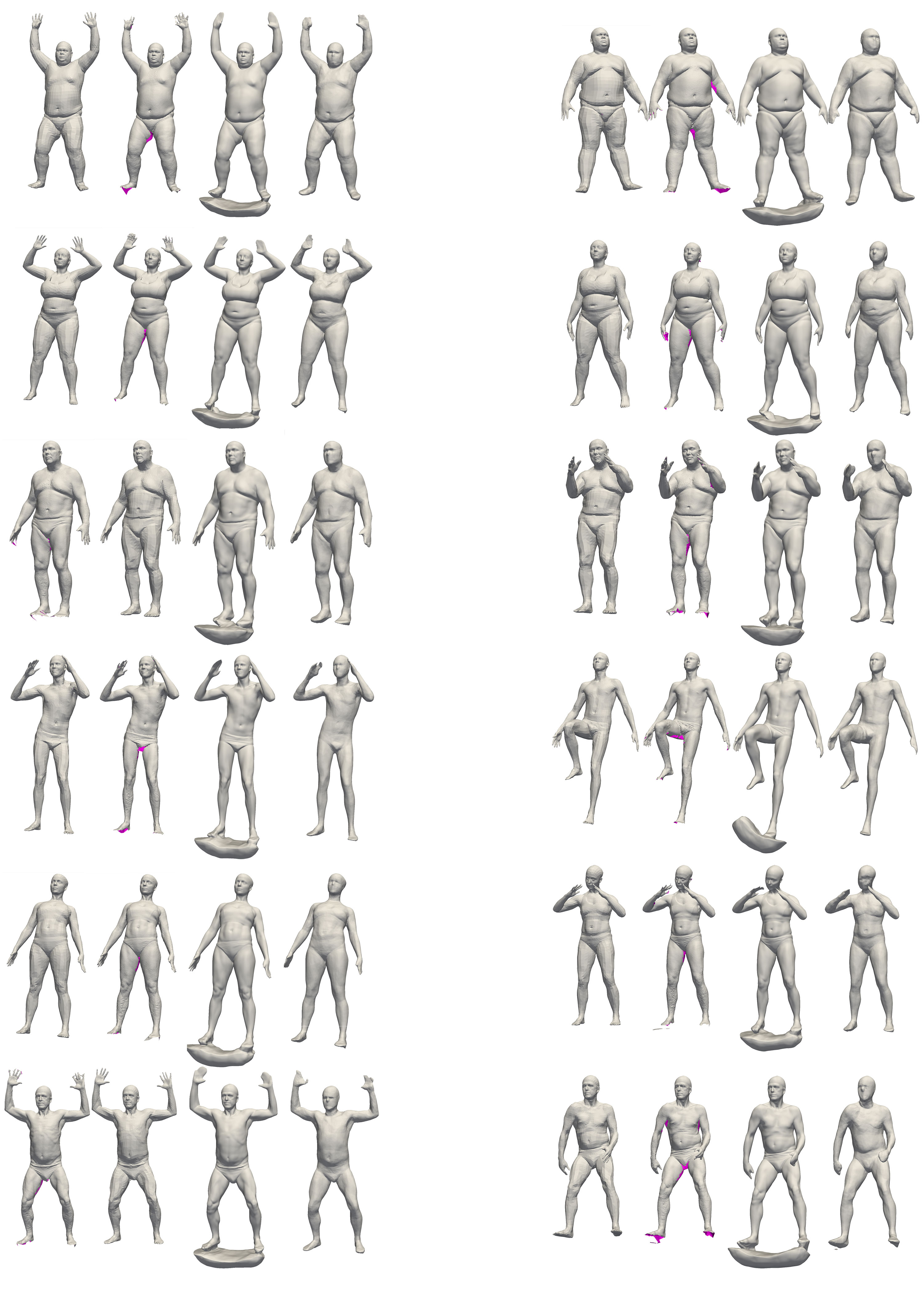}
    \caption{Additional results from D-Faust shape space experiment (see Section \ref{ss:shape_spaces} in main paper).
    Left - train results, right - test results. 
    In each row (left to right): Registration (not used), raw scan (source of input point clouds), our result, and SAL result. 
    Back-faces are colored in magenta.}
    \label{fig:supp_traintest}
    \vspace{-10pt}
\end{figure*}

\end{document}